\title[Constrained Upper Confidence Reinforcement Learning]{Constrained Upper Confidence Reinforcement Learning}
\author{%
 \Name{Liyuan Zheng, Lillian J. Ratliff} \Email{liyuanz8, ratliffl@uw.edu}\\
 \addr Department of Electrical \& Computer Engineering, University of Washington
 %\AND
 %\Name{Lillian J. Ratliff} \Email{ratliffl@uw.edu}\\
 %\addr Department of Electrical \& Computer Engineering\\
 %University of Washington
}
\newcommand{\CUCRL}{{\tt C-UCRL}}
\newcommand{\UCRL}{{\tt UCRL2}}
\newcommand{\RSUCRL}{{\tt RS-UCRL2}}
\begin{document}

\maketitle

\begin{abstract}%
    Constrained Markov Decision Processes are a class of stochastic decision problems in which the decision maker must select a policy that satisfies auxiliary cost constraints. 
%with additional constraints framework where agent must satisfy constraints on expectations of auxiliary costs. 
%In many applications, it is often the case that the constraint costs and reward functions are unknown a priori.
This paper extends upper confidence reinforcement learning for  settings in which the reward function and the  constraints, described by cost functions, are unknown a priori but the transition kernel is known. Such a setting is well-motivated by a number of applications including exploration of unknown, potentially unsafe, environments.  
%Different than other approaches, our goal is to design algorithms that 
We present an algorithm {\CUCRL} and show that it achieves sub-linear regret ($ O(T^{\frac{3}{4}}\sqrt{\log(T/\delta)})$) with respect to the reward while  satisfying the constraints even while learning with probability $1-\delta$. 
%IWe provide analysis to show that our proposed algorithm achieves  reward regret.
%We show that a stochastic multi-armed bandit with per-round budget constraints is a special case of our framework and derive results for this subclass of problems. 
%A Multi-Armed Bandit example and several safety grid world examples are provided to demonstrate the use of proposed algorithm.
Illustrative examples are provided. 

\end{abstract}

%\begin{keywords}%
%  \textcolor{red}{List of keywords}%
%\end{keywords}

%\section{Introduction}

%\begin{itemize}
%  \item Limit the main text (not counting references) to 8 PMLR-formatted pages, using this template.
%  \item Include {\em in the main text} enough details, including proof details, to convince the reviewers of the contribution, novelty and significance of the submissions.
%\end{itemize}
\section{Introduction}
\label{sec:intro}
%\input{intro}
%\section{Introduction}
%\label{sec:intro}
Markov Decision Processes (MDPs) have been successfully utilized to model sequential decision-making problems in stochastic environments. In the typical approach to learning a policy, the decision-maker trades off between exploration and exploitation, gradually improving their performance at the task as learning progresses. Reinforcement learning, a standard paradigm of learning in MDPs, has shown exceptional success in a variety of domains such as video games \citep{mnih2015human}, robotics \citep{lillicrap2015continuous, levine2016end}, recommender systems \citep{shani2005mdp}, autonomous vehicles \citep{sallab2017deep}, among many others. 

However, in many of these real-world applications, there is often additional constraints, or specifications that lead to constraints, on the learning problem. 
%For instance, in many applications it is crucial to guarantee  safety while exploring the MDP \citep{amodei2016concrete}---e.g.,  \lily{This is not a good example of "saftey" but rahter just a specification so maybe we can find a safety example a 
For instance, a recommender system should avoid presenting offending items to users and autonomous vehicles must avoid crashing into others while navigating \citep{garcia2015comprehensive}. 
Building algorithms that respect  safety constraints not only during normal operation, but also during the initial learning period, is a question of particular interest \citep{leike2017ai}. This problem is known as the \emph{safe exploration problem} \citep{moldovan2012safe, amodei2016concrete}.
 In the standard MDP framework, an approach for baseline performance is risk-sensitive reinforcement learning \citep{coraluppi1999risk, garcia2015comprehensive}, where the optimization criterion is transformed in order to reflect a subjective measure balancing the return and the risk.

On the other hand, in a safety-critical environment, it is more reasonable to separate the return and the risk criterion, and enforce constraint satisfaction in the learning procedure. A standard formulation for an environment with safety constraints is the constrained MDPs (CMDPs) \citep{altman1999constrained}. A decision-maker facing a  CMDP aims to maximize the total reward while satisfying the constraints on costs in expectation over the whole trajectory. 

In recent literature, policy gradient-based reinforcement learning algorithms have been proposed as a means to learn a policy for a CMDP. The following are two constrained policy search algorithms with state-of-the-art performance guarantees: Lagrangian-based actor-critic algorithm \citep{bhatnagar2012online, chow2018risk} and Constrained Policy Optimization (CPO) \citep{achiam2017constrained}. However, for these policy gradient-based methods, safety is only approximately guaranteed \emph{after} a sufficient learning period. The fundamental issue is that without a model, safety must be learned via trial and error, which means it may be violated during initial learning interactions.

Model-based approaches have utilized Gaussian processes to model the state safety values or the dynamic uncertainties \citep{berkenkamp2017safe, koller2018learning, wachi2018safe, cheng2019end} or utilized Lyapunov-based methods \citep{chow2018lyapunov} to guarantee safety during learning. Although these methods guarantee constraint satisfaction during learning, an arguably valuable analysis of the regret  is lacking.

In unconstrained settings when the reward and transition kernel are unknown, upper confidence based reinforcement learning algorithms have been proposed---namely, {\UCRL} \citep{jaksch2010near}---with sub-linear regret. The key idea is to build confidence intervals on the reward and transition kernel and iteratively solve for policies using value iteration based methods.

In this work, we are not only interested in learning the optimal policy that satisfies the constraints via interacting with the stochastic environment, but also in ensuring performance guarantees on the learning algorithm during learning. With some practical scenarios in mind, we make the assumption that the rewards and constraint costs are unknown.
  For instance, consider a rover exploring the Mars landscape; here one can model the dynamics of the rover as known with some uncertainty and the reward and constraints which model the value of exploring the environment as unknown---e.g., constraints can be abstracted as costs which seek to limit the frequency of visiting a potentially hazardous states \citep{el2019controlled}.
 
 Motivated by upper confidence reinforcement learning 
 \citep{jaksch2010near}, we introduce the constrained upper confidence reinforcement learning (\CUCRL) algorithm which combines elements of the classical {\UCRL} algorithm with robust linear programming\footnote{We remark that {\UCRL} assumes the transition kernel is unknown a priori where we assume it is known; we leave extending our approach to unknown dynamics to future work.}.
 %\textcolor{red}{this should be in the list of contributions also....}
%\textcolor{red}{need a citation here for this.}
%\textcolor{red}{Need to introduce UCRL since you have not talked about this at all to this point... What is it and what is UCRL2? the reader will be asking these questions. Motivate why we take the UCRL type approach also differentiate our approach from UCRL by remarking that we do not need to learn the transition kernel because its taken as known. this fact also needs to be motivated by an example application... }
We define our goals as follows: (1) maintain constraint satisfaction throughout the learning process with high probability, and (2) achieve sub-linear regret comparing the rewards collected by the algorithm during learning with the reward of an optimal stochastic policy.

\paragraph{Contributions.} The contributions can be summarized as follows.
 Building on {\UCRL}, we introduce the {\CUCRL} algorithm (Algorithm~\ref{alg:safeUCRL}).
 We show that {\CUCRL} is guaranteed to satisfy constraints during learning with probability at least $1-\delta$ (Theorem~\ref{thm:deltasafe}) and achieves $ O(T^{\frac{3}{4}}\sqrt{\log(T/\delta)})$ reward regret (Theorem~\ref{thm:regret}).
    %\textcolor{red}{you probably need to distinguish between regret and reward regret. be clear why you are adding the qualifier 'reward' (i.e. because we have a cost...)}
   Of independent interest, we note that when the state space is trivial, the setting we consider subsumes stochastic multi-armed bandits with per-round budget constraints, where the optimal policy is a randomized policy across arms.
   %Hence, our results provide novel regret guarantees in this setting as well. 
%\textcolor{red}{are you sure this is novel?} 

\paragraph{Organization.}
%\lily{Say we have included a breif overview of related work at the end in some section and longer discussion in the version you will hang on your website.}
The rest of the paper is organized as follows. An overview of related work  is
provided in Section~\ref{sec:related_work}.
%Section \ref{sec:related_work} list the related work on RL methods solving CMDPs, RL algorithm solving MDPs with regret analysis (UCRL2) as well as MABs algorithms with budget constraint and compare the this paper's contribution to those literature. 
 Mathematical preliminaries and our algorithm are introduced in Section \ref{sec:algo}. Analysis of both constraints satisfaction and reward regret is provided in Section \ref{sec:analysis}.
%Simulations for MABs example and several safety grid world are presented in Section \ref{sec:experiment} to demonstrate the proposed algorithms. Section \ref{sec:conclusion} concludes the paper.
Several illustrative examples are provided in Section \ref{sec:experiment}.
In those experiments we compare our proposed method to Risk-Sensitive {\UCRL} algorithm and show that {\UCRL} algorithm fails to converge to the optimal policy while our algorithm does.
%\textcolor{red}{Should say here what some of the take aways of the examples are... What do you compare to? and what is the outcome of the comparison...}  
The paper is concluded in Section \ref{sec:conclusion} with a brief summary and
discussion of future directions. 
%For a longer discussion and connections to related work, see the longer version of the paper~\citep{zheng2019constrained}.
%on the related work you are referred to the longer version of this paper.

%The rest of the paper is organized as follows. Section \ref{sec:related_work} list the related work on RL methods solving CMDPs, RL algorithm solving MDPs with regret analysis (UCRL2) as well as MABs algorithms with budget constraint and compare the this paper's contribution to those literature. Mathematical preliminaries and our algorithm are introduced in Section \ref{sec:algo}. Analysis of both constraints satisfaction and reward regret is provided in Section \ref{sec:analysis}.
%Simulations for MABs example and several safety grid world are presented in Section \ref{sec:experiment} to demonstrate the proposed algorithms. Section \ref{sec:conclusion} concludes the paper.
%Several illustrative examples which leverage the grid world abstraction are provided in Section \ref{sec:experiment}.
%In those experiments we compare our proposed method to Risk-Sensitive UCRL2 algorithm and show that UCRL2 algorithm fails to converge to the optimal policy while our algorithm does.
%\textcolor{red}{Should say here what some of the take aways of the examples are... What do you compare to? and what is the outcome of the comparison...}  
%The paper is concluded in Section \ref{sec:conclusion} with a brief summary.

\section{Related Work}
\label{sec:related_work}
%\section{Related Work}
%\label{sec:related_work}

Recently, several policy gradient-based reinforcement learning algorithms have been proposed
for learning policies for CMDPs.  In particular, there are two noteable
constrained policy search algorithms which enjoy state-of-the-art performance:
a Lagrangian-based algorithm \citep{bhatnagar2012online, chow2018risk} and
Constrained Policy Optimization (CPO) \citep{achiam2017constrained}. The
Lagrangian-based algorithm formulates the CMDP problem as a minimax problem and
uses primal-dual gradient optimization to find the saddle point solution. 
While this procedure will asymptotically converge to the saddle point solution,
in general there is no guarantee on policies being safe during the learning procedure. 
On the other hand, CPO---a method that derives from an extension of trust-region
policy optimization (TRPO)---guarantees monotonic performance improvements on
the expected reward and a guarantee on constraint satisfaction throughout training.
While this algorithm is safe during learning, analyzing its convergence is challenging and the regret analysis with respect to reward is lacking.

%However, the analysis of performance of CPO during learning is challenging, and applying this methodology to other RL algorithms is non-trivial.\textcolor{red}{I dont know what you mean by 'analysis of performance of CPO durign learning is challenging' are you saying it is not possible to extend to our setting? or that it difficult for some specific reason? Otherwise we certainly need to compare to this approach\ldots}

As an alternative to  policy gradient reinforcement learning algorithms, linear
programming based algorithms have been proposed. In \citep{el2019controlled}, CMDPs with known reward, constraints, transition kernel but uncertain initial state distribution are considered. Linear programming based algorithms are proposed to solve for safe policies in this setting. In our setting, however, the reward and constraints are stochastic and considered  unknown a priori, which the stochastic transition kernel is known.

%For instance, algorithms for synthesizing feasible decision-making policies that satisfy safety constraints for all time epoch under uncertainty of the initial state distribution have been developed in \citep{el2019controlled}.\textcolor{red}{I dont understand what you are trying to say in the preceding sentence. What does it mean that  '\ldots satisfy safety constraints for all time epoch under uncertainty of the initial state distribution'? Are you trying to say that in this approach if you only have uncertainty on the initial distribution then you can use the LP based approach in the cited paper to learn a policy and will remain safe while learning?}  The setting we consider, however, is such that the reward and constraints are unknown and stochastic.
%and we ensure the constraints satisfaction during learning with high probability.

Most similar to our approach is {\UCRL}; in particular, our approach can be
viewed as an extension of {\UCRL} \citep{jaksch2010near}, in some sense, by
incorporating constraints; the one difference is that we assume the transition
kernel is known while the classical {\UCRL} algorithm does not. We leave
extending our setting to unknown transition kernels to future work. As alluded
to in the introduction, in {\UCRL}, the reward and transition kernel are
approximated and the policy is obtained by value iteration based methods in a “optimism in the face of
uncertainty” fashion.
%\textcolor{red}{the proceeding sentence does not make sense. what does it mean to solve an ``optimism in the face of uncertainty extended value iteration''? } 
Further, the performance of {\UCRL} is
analyzed by bounding the regret with respect to the optimal \emph{deterministic}
policy. CMDPs, however, in general do not admit deterministic policies. In
{\CUCRL}, the reward and constraints are approximated and the policy is
obtained by solving a robust linear program. Performance is assessed by
computing the \emph{reward} regret with respect to the optimal randomized policy.

Finally, our work is related to the multi-armed bandit problem with  constraints. 
Previous works, e.g., have considered the multi-armed bandit problem with an auxiliary cost in addition to the traditonal reward
\citep{ding2013multi, zhou2018budget}. The `game' (between the player and the environment) ends when the sum of current costs associated with the played arms exceeds the remaining budget, which is fixed and known to the player. The typical approach is to  construct upper confidence bounds for the reward-to-cost ratio and then utilize them in upper confidence bound-based algorithms. On the other hand,  in our approach, we use upper confidence bounds for both reward and cost, and solve a linear program to obtain the policy policy. In related work,  fairness constraints are incorporated into a multi-armed bandit setting; in particular, arms that are perceived to have less value/reward should never favored over better performing alternatives, despite a learning algorithm's uncertainty over the true payoffs \citep{joseph2016fairness}. In such settings, the algorithm is forced to pick arms uniformly until the player has enough confidence of the performance of arms. 
Connecting to this body of work, our problem reduces to a constrained multi-armed bandit problem when there is a single state. 
% \textcolor{red}{Need to be careful here because we do have a really specific average cost formulation. What are you trying to get across in this paragraph about MABs? Yeah, I am not sure what the point of this paragraph is. Perhaps this doesnt belong in related work and you can just have some minor comments about related work in the section where you specialize the results to MABs. I just am a little concerned about making claims of novelty when I am not sure this is novel for MABs...}
 %Our constraint definition is broader as we consider stochastic cost signal for each arm and our algorithm ensure that the per-round budget constraints are satisfied on expectation. 
 The main difference between our setting and that of the majority existing multi-armed bandit literature with constraints is that the optimal policy and policies obtainable by our algorithm can be a randomized or stochastic policy as opposed to a deterministic `best arm' policy. 
 %Our setting is different from all multi-armed bandit literature as the optimal policy could be a randomized policy.

\section{Constrained Upper Confidence Reinforcement Learning Algorithm}
\label{sec:algo}
%\section{Constrained Upper Confidence Reinforcement Learning Algorithm}
%\label{sec:algo}

% One motivation example is taxi navigating in the city. \lily{This doesn't make sense. above you say the transition kernel is known but below you say it is unknow for the taxi example: "which is considerably static and unknown"... And the dynamics would not be the city map. The interconnection structure would be the city map, but whether a driver traverses an edge depends on their actions and where their fare wants to go, so its not clear to me why this is an example where the transition kernel is known. So this paragraph needs to be revised. Find an example where it makes sense that you know the transition kernel and where it is stochastic. Maybe you can say that in the taxi case that the transition kernel is determined by the map structure, the actions of the driver (which typically are dictated by something like google maps once they have a fare), and the demand for taxis at nodes. And as such it is reasonable that this P could be estimated... this is still a bit of a stretch to me... Also you haven't introduced the notation for P yet. This paragraph should come after the paragraph below} The transition dynamic is the city map, which is considerably static and unknown but the profit and fuel consumption depend on several factors such as weather, are relatively stochastic and unknown.

An MDP is a tuple $(\mathcal{S}, \mathcal{A}, P, r)$, where $\mathcal{S}$ is the set of states, $\mathcal{A}$ is the set of actions, $P: \mathcal{S} \times \mathcal{A} \times \mathcal{A} \rightarrow [0,1]$ is the transition kernel such that $P(s'|s,a)$ is the probability of transitioning to state $s'$ given that the previous state was $s$ and the agent took action $a$ in $s$, and $r: \mathcal{S} \times \mathcal{A} \rightarrow [0,1]$ is the reward function. A stationary policy $\pi: \mathcal{S} \times \mathcal{A} \rightarrow [0,1]$ is a map from states to a probability distribution over actions, with $\pi(a|s)$ denoting the probability of selecting action $a$ in state $s$. We consider the setting in which the transition kernel $P(s'|s,a)$ is known to the agent, but the reward and costs are stochastic and unknown. In the example of a rover exploring the surface of Mars, the agent (rover) is aware of the transition probability of next state based on its action, but the \emph{safety quality} of each state is unknown.
%\textcolor{red}{Some example where this is true in practice?}
Let $S = |\mathcal{S}|$ and $A = |\mathcal{A}|$ where $|\cdot|$ is the cardinality of its argument. We use the notation $[\cdot]=\{1,\ldots, \cdot\}$ for index sets.

\subsection{Constrained Markov Decision Processes}
A CMDP is an MDP augmented with `cost' constraints that restrict the set of allowable policies for that MDP. 
For a given CMDP, we consider the performance measure to be the \emph{infinite horizon average reward} which is given by
\begin{equation}
J(\pi) =\textstyle \lim_{T \to \infty} \mathbb{E}_{\tau \sim \pi} \big[ \frac{1}{T} \sum_{t=0}^{T-1} r(s_t, a_t) \big]
\end{equation}
where $\tau$ denotes a trajectory $\tau = (s_0, a_0, s_1, \dots)$, and $\tau \sim \pi$ is shorthand for indicating that the distribution over trajectories depends on $\pi$: $s_0 \sim p(s_0), a_t \sim \pi(\cdot|s_t), s_{t+1} \sim P(\cdot|s_t, a_t)$. Similarly, define the \emph{average constraint costs} by
\begin{equation}
C_i(\pi) = \textstyle\lim_{T \to \infty} \mathbb{E}_{\tau \sim \pi} \big[ \frac{1}{T} \sum_{t=0}^{T-1} c_i(s_t, a_t) \big].
\end{equation}
where $\{c_1, \dots, c_m\}$ with $c_i: \mathcal{S} \times \mathcal{A} \rightarrow [0,1]$ are the cost constraints. 
The CMDP is then defined by
% that solves
\begin{equation}
\textstyle\max_{\pi} \ \{J(\pi)|\  C_i(\pi) \le d_i, \ \ \forall\ i \in [m]\}
\end{equation}
where $\{d_1, \ldots, d_m\}$ are upper bounds on the average constraint costs.  Note that without loss of generality both the reward and costs are random variables with a distribution supported on $[0,1]$.

Denote the mean of reward and cost constraint functions as $\bar{r}(s, a) =
\mathbb{E}[r(s,a)]$, $\bar{c}_i(s,a) = \mathbb{E}[c_i(s,a)]$ where the expectation is taken with respect to the distribution of the reward and cost function of that state-action pair $(s,a)$. 
If the transition kernel $P(s'|s,a)$, the mean of the reward function $\bar{r}(s, a)$, and mean cost functions $\bar{c}_i(s,a)$ are all given, them we can solve the CMDP by solving the following linear program \citep{altman1999constrained}: 
\begin{align*}
\max_{y} & \quad \textstyle\sum_{s, a} \bar{r}(s, a) y(s, a) \\
\text{s.t.} & \textstyle\quad \sum_{a'} y(s', a') = \sum_{s, a} P(s'|s, a) y(s, a) \\
& \quad \textstyle\sum_{s, a} y(s, a) = 1, \quad y(s, a) \ge 0 \\
& \quad \textstyle\sum_{s, a} \bar{c}_i(s, a) y(s, a) \le d_i, \ i\in [m]
\end{align*}
To simplify notation, we write the above linear program in matrix form as
follows:
\begin{equation}
\textstyle\max_{y} \{ \  \bar{r}^\top y \ |\  I_o y = P y, \  \mathbf{1}^\top y = 1,\  y \ge 0, \ \bar{c}^\top y \le d\}
\end{equation}
where $\bar{r} \in \mathbb{R}^{SA}, y \in \mathbb{R}^{SA}$, $\bar{c} \in \mathbb{R}^{SA\times m}, d \in \mathbb{R}^m$, $P \in \mathbb{R}^{S \times SA}$, and  $I_o \in \mathbb{R}^{S \times SA}$ is a sparse matrix built by placing $S$ row blocks of length $A$ in a block diagonal fashion, where each row block consists of all ones. 
%\begin{align}
%\max_{y} & \quad \textstyle\sum_{s, a} \bar{r}(s, a) y(s, a) \\
%\text{s.t.} & \textstyle\quad \sum_{a'} y(s', a') = \sum_{s, a} P(s'|s, a) y(s, a) \\
%& \quad \textstyle\sum_{s, a} y(s, a) = 1, \quad y(s, a) \ge 0 \\
%& \quad \textstyle\sum_{s, a} \bar{c}_i(s, a) y(s, a) \le d_i, \ i\in [m]
%\end{align}
Here, $y \in \mathbb{R}^{S \times A}$ represents the steady-state occupation measure defined by
\begin{equation}
y(s, a) = \textstyle\lim_{T \to \infty} \mathbb{E}_{\tau \sim \pi} \big[\frac{1}{T} \sum_{t=0}^{T-1} \mathbf{1} \{ s_t=s, a_t=a \} \big].
\end{equation}
With $\bar{y}$ the solution of this linear program, the optimal  stationary policy is
\begin{equation}
\bar{\pi}(a|s) = \textstyle\bar{y}(s, a)/(\sum_{a \in \mathcal{A}} \bar{y}(s, a)).
\label{eq:policyrecover}
\end{equation}

\noindent\textbf{Remark}. \emph{It is worth noting that unlike in tabular MDPs without
    constraints, where the optimal policy is always deterministic, the optimal
    policy in CMDPs could be stochastic \citep{puterman2014markov}.  It is, in
    fact, trivial to solve the CMDP if the optimal policy in CMDPs is
    deterministic because that means the constraints are not active.}

\begin{algorithm}[t]
	\SetAlgoLined
	\KwIn{safety parameter $\delta \in (0,1)$, baseline policy $\pi_0(a|s)$, episode length $h$.}
	{\textbf{Initialization:} set $t=1$, observe the initial state $s_1$ }
	
	%\KwResult{how to write algorithm with \LaTeX2e }
	%initialization\;
	\For{$\mathrm{episodes}$ $k=1, 2, \dots, K$}{
	    $t_k = t$ \tcp*{initialize start time of episode $k$ }
	    \While{$t\leq t_k + h$\tcp*{Execute baseline policy $h$ times for exploration}}{  

			Draw action $a_t \sim \pi_0(\cdot|s_t)$ 
			
			Observe reward $r_t$, costs $c_{i,t}$, and the next state $s_{t+1}$ 
			
			$t\gets t+1$
			
			%\lily{do you mean for each $t$ in the exploration phase?} 
			
		}

		$N_k(s,a) = \textstyle\sum_{t' = 1}^{t} \mathbf{1}(s_{t'} = a, a_{t'} = a), \ \forall (s,a)\in \mathcal{S}\times\mathcal{A}$ \tcp*{set the state-action count}
		
 $R_k(s, a) =  \textstyle\sum_{t' = 1}^{t} r_{t'} \mathbf{1}(s_{t'} = a, a_{t'} = a) $\tcp*{compute cumulative reward}
		
$C_{i, k}(s, a) =  \textstyle\sum_{t' = 1}^{t} c_{i,{t'}} \mathbf{1}(s_{t'} = a, a_{t'} = a) $\tcp*{compute the cumulative costs}
		
 $\hat{r}_k(s, a) = \textstyle\frac{R_k(s, a)}{\max \{1, N_k(s, a)\}}, \quad \hat{c}_{i,k}(s, a) = \frac{C_{i, k}(s, a)}{\max \{1, N_k(s, a)\}}$\tcp*{compute estimates}

$\tilde{y}_k\gets \arg\max$ of 	\eqref{eq:RLP} using $\tilde{r}_k(s,a)$ and $\tilde{c}_{i,k}(s,a)$ in \eqref{eq:tilder} and \eqref{eq:tildec}, resp.

$\tilde{\pi}_k\gets \tilde{y}_k(s,a)/(\sum_{a\in \mathcal{A}}\tilde{y}_k(s,a))$ \tcp*{recover policy}
%\tcp*{Get steady-state occupation measure from \eqref{eq:RLP}}

%		Use \eqref{eq:RLP} to solve for a steady state  and recover a policy $\tilde{\pi}_k$ where
	%	\[
%		\tilde{r}_k(s, a) = \textstyle\min\big\{ %\hat{r}_k(s,a) + \big(\frac{\log (S A (m+1) \pi^2 t_k^3 / 3 \delta)}{2 \max \{1, N_k(s,a)\}}\big)^{1/2}, 1 \big\}\]
%	\[	\tilde{c}_{i,k}(s, a) =\textstyle\min \big\{ \hat{c}_{i,k}(s,a) + \big(\frac{\log (S A (m+1) \pi^2 t_k^3 / 3 \delta)}{2 \max \{1, N_k(s,a)\}}\big)^{1/2}, 1 \big\}\]
	%	\textcolor{red}{Include eqn for computing policy}  \tcp*{Compute policy $\tilde{\pi}_k$} 

	   \While{$t\leq t_k + kh$ \tcp*{Execute $\tilde{\pi}_k$ policy $(k-1)h$ times}}{
	   	   
	   	    Draw action $a_t \sim \tilde{\pi}_k(\cdot|s_t)$ 
	   	    
	   	    Observe reward $r_t$, costs $c_{i,t}$, and the next state $s_{t+1}$
	   	    
	   	    $t\gets t+1$
	   	    
	   }
	}
\caption{Constrained UCRL ({\CUCRL}) algorithm}
\label{alg:safeUCRL}
\end{algorithm}

%\lily{Is there an initial episode that lets you get the first estimates of $N_k$, $R_k$, $C_{i,k}$? Otherwise, I would suggest creating an initial episode where you just execute the baseline policy to get some values for $N_k$, $R_k$, etc. and then in each episode for $k=1, ...$ I would put the updates for $N_k$, $R_k$ etc after teh explore exploit phases.}
\subsection{Constrained Upper Confidence Reinforcement Learning Algorithm}
Since the reward and constraint cost functions are unknown, motivated by {\UCRL}, we introduce {\CUCRL} (Algorithm~\ref{alg:safeUCRL}). In general, the {\CUCRL} algorithm follows a principle of ``optimism in the face of reward uncertainty; pessimism in the face of cost uncertainty." That is, it defines confidence intervals for the reward and cost of each state-action pair given the observations so far, and solves for the optimistic policy that satisfies the constraints.
More specifically, in {\CUCRL}, given the current confidence interval estimates, we use a robust linear program \citep{luenberger1984linear} formulation to find a policy using the confidence intervals as determined at the current iteration. 

In particular, in episode $k$, we start by executing the baseline policy $\pi_0$ for a constant $h$ number of iterations\footnote{The heuristic for choosing $h$ is based on the mixing time of the Markov chain induced by $\pi_0$ given the known transition kernel for the CMDP.}.  It is common to assume a initial safe baseline policy  \citep{achiam2017constrained} and without loss of generality, we assume under such policy, the Markov chain resulting from the CMDP is irreducible and aperiodic \citep{bhatnagar2009natural}. This baseline policy could, e.g., be obtained by some prior information about which states are safe to start the conservative exploration\footnote{Choosing $\pi_0$ is an important component of {\CUCRL}. In Section~\ref{sec:experiment}, we provide some intuitive choices for the simple examples we present, while we leave further development on how to select $\pi_0$, either heuristically or theoretically, to future work.}. After executing $\pi_0$,  we define 
estimates of the reward and costs by
\[\textstyle\hat{r}_k(s, a) = \textstyle\frac{R_k(s, a)}{\max \{1, N_k(s, a)\}}\] and \[\textstyle\hat{c}_{i,k}(s, a) = \frac{C_{i, k}(s, a)}{\max \{1, N_k(s, a)\}},\]
 respectively, where $N_k(s,a)$, $R_k(s,a)$, and $C_{i,k}(s,a)$ are the  state-action count, and cumulative reward and costs, respectively, as defined in Algorithm~\ref{alg:safeUCRL}. The visitation frequency random variable $N_k(s,a)$ is defined to be the sum of indicators of whether or not the state-action pair $(s,a)$ was visited in each iteration over all episodes. 
 The corresponding reward $R_k(s,a)$ and constraint costs $C_{i,k}(s,a)$ are defined similarly.

 Using these estimates, we define 
\begin{equation}
\tilde{r}_k(s, a)  = \textstyle\min \big\{ \hat{r}_k(s,a) + \big(\frac{\log (S A (m+1) \pi^2 t_k^3 / 3 \delta)}{2 \max \{1, N_k(s,a)\}}\big)^{1/2}, 1 \big\} 
\label{eq:tilder}\end{equation}
and
\begin{equation}
\tilde{c}_{i,k}(s, a)  =\textstyle \min \big\{ \hat{c}_{i,k}(s,a) + \big(\frac{\log (S A (m+1) \pi^2 t_k^3 / 3 \delta)}{2 \max \{1, N_k(s,a)\}}\big)^{1/2}, 1 \big\},
\label{eq:tildec}
\end{equation}
where 
\[\textstyle\big(\frac{\log (S A (m+1) \pi^2 t_k^3 / 3 \delta)}{2 \max \{1, N_k(s,a)\}}\big)^{1/2}\]defines the confidence interval as we show in Section~\ref{sec:analysis}.
We then use \eqref{eq:tilder} and \eqref{eq:tildec} to
define  the following robust linear program:
\begin{equation}
\textstyle \max_{y} \{ \ \tilde{r}_k^\top y \ | \ I_o y = P y, \ \mathbf{1}^\top
y = 1, \ y \ge 0, \ \tilde{c}_k^\top y \le d\}. \tag{RLP}\label{eq:RLP}
\end{equation}
A few comments here on guaranteeing that the feasible set is non-trivial are warranted. 
%First, we note that there is some art to selecting the baseline policy in that the goal is to ensure that in each episode the robust linear program that we solve using the confidence interval estimates of the reward and constraint costs (i.e., \eqref{eq:RLP}) has a non-trivial feasible set. 
Our analysis results are predicated on $\pi_0$ and $h$ being chosen such that in each episode the robust linear program we solve has at least one feasible solution. The duration $h$ is chosen based on the mixing time of the induced Markov chain under the baseline policy with the goal of ensuring with high probability that the feasible set is not empty; for instance, `sufficient' exploration will guarantee that $\tilde{c}_1^\top y\leq d$ for some $y\in \{I_oy=Py, \ \boldsymbol{1}^\top y=1, y\geq 0\}$. It is possible that in the first episode, even after $h$ iterations of executing the baseline policy, that there is no $y$ such that $\tilde{c}_{1}^\top y\leq d$\footnote{ e.g., if $\tilde{c}_{i,1}(s,a)=1$ for each state-action pair and constraint $i\in [m]$, then clearly the feasible set is empty if $d>1$.}. A heuristic we use in practice is to run the baseline policy $\pi_0$ for as many iterations as it takes for $y_0 \in \{I_oy=Py, \ \boldsymbol{1}^\top y=1,\ y\geq 0,\ \tilde{c}_1^\top y\leq d\}$. Then, we are guaranteed that in all future episodes,  $y_0$ is always in the feasible set of \eqref{eq:RLP}.  We leave further exploration of theoretically guaranteeing that the \eqref{eq:RLP} has a non-trivial feasible set in the first episode to  future work.

Returning to the description of the algorithm, in episode $k$, the solution $\tilde{y}_k$ to the robust linear program is then used to construct the policy $\tilde{\pi}_k$ via \eqref{eq:policyrecover}. This policy is executed for a linearly increasing number of iterations $(k-1)h$ where $k$ is the episode index and $h$ is the fixed duration used for executing the baseline policy.  To summarize, for each episode of {\CUCRL}, we execute the baseline policy for $h$ steps, estimate the reward and costs, and then execute $\tilde{\pi}_k$ for a linearly increasing (in the number of epochs) number of steps $(k-1)h$, making $kh$ the total duration of episode $k$.
%\textcolor{red}{This is not what I was asking for. I defined $N$ etc above. you need to say how they are initialized....}
%$N_k(s,a)$ the total number of times $(s,a)$ has been visited prior to episode $k$. $R_k(s,a)$ and $C_{i,k}(s,a)$ are the observed accumulated rewards and cost prior to episode $k$.
%\textcolor{red}{Need to say somethign about how things like $N_k$, $R_k$, etc are initilized either here or in the algorithm block.}

%As in \eqref{eq:tilder} and \eqref{eq:tildec}, the confidence intervals of reward and cost depend on the visit times of that state-action pair.
%\textcolor{red}{this is super hand wavy and I am not sure i know what you mean by 'after enough exploration'}
%\lily{you are going to have to formalize this or provide some commentary on what the implication of this assumption is... how strong is it for example? how would you verify it? how much exploration do you need?}
 %\textcolor{red}{can you at least put some discussion of how you might obtain a baseline?} 
%We execute the baseline policy in the early learning stage to build estimation of reward and cost by \eqref{eq:tilder} and \eqref{eq:tildec} until there is a feasible solution to \eqref{eq:RLP}. 
%Using the solution to \eqref{eq:RLP}, namely $\tilde{y}_k$.  we recover the policy $\tilde{\pi}_k$ via \eqref{eq:policyrecover}. 

%\lily{You will have to comment on this too. Why is it reasonable that you have access to a baseline policy and how would you design such a policy?} 

\section{Analysis: Regret Bounds and High-Probability Safety Guarantees}
\label{sec:analysis}
%\section{Analysis}
%\label{sec:analysis}

In this section, we summarize our analysis results. We first show that {\CUCRL} has guarantees on constraint satisfaction during learning. Then, we provide regret analysis with respect to the reward, showing that the regret is sub-linear. 

\subsection{Constraint/Safety Guarantees}
To capture constraint satisfaction, we leverage the notion of $\delta$-safety. 

\begin{definition}[$\delta$-safe]
	An algorithm is $\delta$-safe if, with probability at least $1-\delta$, for all time steps $t$, the policy executed by the algorithm satisfies
$C_i(\pi_t) \le d_i$, $\forall i \in [m]$.
\end{definition}

Following~\cite{jaksch2010near}, we define the set of \emph{plausible CMDPs} by
the confidence intervals for the reward and each of the constraint costs. In
particular, at
episode $k$, let $\mathcal{M}_k$ be the set of plausible CMDPs with states and
actions as in the underlying true CMDP $M$, define by all such CMDPs satisfying
the following:
	\begin{align}
	| \hat{r}_k(s,a) - \bar{r}(s,a)| & \le \textstyle\big(\frac{\log (S A (m+1)
    \pi^2 t_k^3 / 3 \delta)}{2 \max \{1, N_k(s,a)\}}\big)^{1/2},\label{eq:confr} \\
	|\hat{c}_{i,k}(s,a) - \bar{c}_i(s,a)| & \textstyle\le \big(\frac{\log (S A
    (m+1) \pi^2 t_k^3 / 3 \delta)}{2 \max \{1,
    N_k(s,a)\}}\big)^{1/2}, \ i\in [m]\label{eq:confc}
	\end{align}
    for all state-action pairs $(s,a)\in \mathcal{S}\times \mathcal{A}$. Let
    $\mathcal{M}$ be the set of plausible for all episodes $k$.
\begin{lemma} \label{lemma:unionbound}
  For any fixed $k\geq 1$, the probability that the true CMDP $M$ is not contained in
  the set of plausible CMDPs $\mathcal{M}_k$ at episode $k$ is at most $6\delta/(\pi^2 t_k^2)$.
Furthermore, with probability at least $1-\delta$, for every state-action pair $(s,a)$, cost $c_i$ and episode $k$,
	{\CUCRL} satisfies the following:
	\begin{align}
	| \hat{r}_k(s,a) - \bar{r}(s,a)| & \le \textstyle\big(\frac{\log (S A (m+1)
    \pi^2 t_k^3 / 3 \delta)}{2 \max \{1, N_k(s,a)\}}\big)^{1/2},\label{eq:confr} \\
	|\hat{c}_{i,k}(s,a) - \bar{c}_i(s,a)| & \textstyle\le \big(\frac{\log (S A
    (m+1) \pi^2 t_k^3 / 3 \delta)}{2 \max \{1,
    N_k(s,a)\}}\big)^{1/2}\label{eq:confc}
	\end{align}
    Hence, the probability that the true CMDP $M$ is not in the set of all
    plausible CMDPs for any episode $k$ is at most $\delta$---that is,
    $\Pr\{M\notin \mathcal{M}\}\leq \delta$.
\end{lemma}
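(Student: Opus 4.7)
My plan is to carry out a standard UCRL-style nested Hoeffding union bound, adapted to handle the reward and all $m$ cost functions simultaneously. For a fixed episode $k$, state-action pair $(s,a)$, and any one of the $m+1$ mean functions ($\bar{r}$ or some $\bar{c}_i$), I would apply Hoeffding's inequality to the corresponding empirical mean, then layer union bounds over (i) the random visit count $N_k(s,a)$, (ii) the $SA$ state-action pairs, (iii) the $m+1$ mean functions, and finally (iv) the episodes $k$.

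\textbf{Per-episode step.} Conditioning on $N_k(s,a)=n$ for $n\geq 1$, the observed rewards (resp.\ costs) at $(s,a)$ are i.i.d.\ draws from a $[0,1]$-bounded distribution with the appropriate mean, so Hoeffding yields
\[
\Pr\bigl\{|\hat{r}_k(s,a)-\bar{r}(s,a)|>\epsilon \,\big|\, N_k(s,a)=n\bigr\}\leq 2\exp(-2n\epsilon^2),
\]
and the same holds for each $\hat{c}_{i,k}$. Setting $\epsilon=\sqrt{\log(SA(m+1)\pi^2 t_k^3/(3\delta))/(2n)}$ makes the right-hand side $6\delta/(SA(m+1)\pi^2 t_k^3)$. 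The $n=0$ case is handled trivially, since the confidence radius exceeds $1$ in every regime of interest and hence covers all of $[0,1]$. A union bound over $n\in\{1,\ldots,t_k\}$, over the $SA$ state-action pairs, and over the $m+1$ mean functions then produces a per-episode failure probability of at most
\[
t_k\cdot SA\cdot(m+1)\cdot\frac{6\delta}{SA(m+1)\pi^2 t_k^3}=\frac{6\delta}{\pi^2 t_k^2},
\]
which is the first claim of the lemma.

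\textbf{Sum over episodes.} Each episode contains at least one time step, so $t_k\geq k$, and summing the per-episode failure probability gives
\[
\Pr\{M\notin\mathcal{M}\}\leq\sum_{k=1}^{\infty}\frac{6\delta}{\pi^2 t_k^2}\leq\frac{6\delta}{\pi^2}\sum_{k=1}^{\infty}\frac{1}{k^2}=\delta,
\]
as required. The main subtlety I anticipate is that $N_k(s,a)$ is itself a random variable, so Hoeffding cannot be invoked directly on $\hat{r}_k-\bar{r}$; the conditional-then-union-bound trick over $n$ is the standard workaround, and it is precisely what forces the cubic $t_k^3$ inside the logarithm, with one factor of $t_k$ absorbing the union over $n\in\{1,\ldots,t_k\}$ and the remaining two factors providing the $t_k^{-2}$ decay needed for the cross-episode sum to converge via $\sum_k 1/k^2=\pi^2/6$.
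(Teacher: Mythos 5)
Your proposal is correct and follows essentially the same route as the paper's proof: Hoeffding for a fixed sample count $n$, the choice of confidence radius yielding a per-event failure probability of $6\delta/(SA(m+1)\pi^2 t_k^3)$, and nested union bounds over $n\in\{1,\ldots,t_k\}$, state-action pairs, the $m+1$ mean functions, and episodes, with the $n=0$ case handled trivially. The only cosmetic difference is in the final summation (you use $t_k\geq k$ and $\sum_k k^{-2}=\pi^2/6$, while the paper sums directly over the values of $t_k$), which yields the same bound of $\delta$.
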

\begin{proof}
    Consider any fixed state-action pair $(s,a)$ and its visitation frequency 
 $N_k(s,a)$ up to episode $k$. If the state-action pair $(s,a)$ has not been
 visited, then \eqref{eq:confr} and
 \eqref{eq:confc} trivially hold since $N_k(s,a)=0$ by definition and the
 right-hand sides of  \eqref{eq:confr} and
 \eqref{eq:confc} are greater than one when $N_k(s,a)=0$. 
% \textcolor{red}{This is where you need to fill in the details. I think you
%     need to define what $\hat{r}_k$ and $\hat{c}_{i,k}$ are when $N_k(s,a)=0$.
% Show that the right hand side is greater than 1 when $N_k=0$. Also define
% $N_k=0$ before you visit a state. }

     On the other hand, if $N_k(s,a)$ is not zero, meaning the state-action pair
     has been visited, then since for each $(s,a)$ pair, the reward and constraint costs are all
    supported on $[0,1]$ and independent identically distributed (iid)
    real-valued random variables, we can apply Hoeffding's inequality to get a
    bound on the deviation between the true mean $\bar{r}(s,a)$ (respectively,
    $\bar{c}_i(s,a)$) and the empirical mean $\hat{r}_k(s,a)$ (respectively,
    $\hat{c}_{i,k}(s,a)$) given $n$ iid samples of the state-action pair $(s,a)$:
    \begin{equation}	\Pr \left\{ |\hat{r}_{k}(s,a) - \bar{r}(s,a)| \ge \epsilon \right\} \le 2 \exp (-2 n \epsilon^2)
        \label{eq:hoeffding}
    \end{equation}
Consider  
\[\textstyle \epsilon=\left(\frac{1}{2n}\log\left(\frac{SA(m+1)\pi^2t_k^3}{3\delta}
\right)\right)^{1/2},\]
then
\begin{align*}
    \textstyle\Pr\left\{|\hat{r}_{k}(s,a) - \bar{r}(s,a)| \ge \left(\frac{1}{2n}\log\left(\frac{SA(m+1)\pi^2t_k^3}{3\delta}
\right)\right)^{1/2}  \right\}  &\textstyle\leq 2\exp\left( -2n \frac{1}{2n}\log\left(\frac{SA(m+1)\pi^2t_k^3}{3\delta}
\right)\right)\\
	%= \Pr \Big\{ |\hat{r}_{k}(s,a) - \bar{r}(s,a)| \ge \sqrt{\frac{\log (S A (m+1) \pi^2 t_k^3 / 3 \delta)}{2 n}} \Big\} \\
%	& \le 2 \exp \left( -2 n \frac{\log (S A (m+1) \pi^2 t_k^3 / 3 \delta)}{2 n} \right) \\
&=\textstyle \frac{6 \delta}{S A (m+1) \pi^2 t_k^3}
\end{align*}
	Similarly, for each state-action pair $(s,a)$ and constraint cost indexed by $i$,
	\begin{equation}
\textstyle	\Pr \left\{ |\hat{c}_{i,k}(s,a) - \bar{c}_i(s,a)| \ge \left(\frac{1}{2n}\log\left(\frac{SA(m+1)\pi^2t_k^3}{3\delta}
\right)\right)^{1/2}\right\}
	%\sqrt{\frac{\log (S A (m+1) \pi^2 t_k^3 / 3 \delta)}{2 n}} \right\} 
\le \frac{6 \delta}{S A (m+1) \pi^2 t_k^3}.
	\end{equation}    
Noting that from the above argument, the confidence intervals hold with
probability one when $(s,a)$ has not be visited, taking a union bound over all
possible values of $n \in\{ 1,\dots, t_k\}$ gives
\begin{equation*}
    	\textstyle \Pr \big\{  \cup_{n = 1}^{t_k}
            \big\{ |\hat{r}_{k}(s,a) - \bar{r}(s,a)| \ge \textstyle\big(\frac{\log (S A (m+1)
    \pi^2 t_k^3 / 3 \delta)}{2 \max \{1, N_k(s,a)\}}\big)^{1/2}
             %\sqrt{\frac{\log (S A (m+1) \pi^2 t_k^3 / 3 \delta)}{2 n}} 
	    	\big\} \big\} \textstyle\le  \sum_{n = 1}^{t_k}  \frac{6
            \delta}{S A (m+1) \pi^2 t_k^3} =
            \frac{6\delta}{SA(m+1)\pi^2t_k^2}
        \end{equation*}
and 
\begin{equation*} \textstyle\Pr \big\{  \cup_{n = 1}^{t_k}
            \big\{ |\hat{c}_{i,k}(s,a) - \bar{c}_i(s,a)| \ge
         \textstyle\big(\frac{\log (S A (m+1)
    \pi^2 t_k^3 / 3 \delta)}{2 \max \{1, N_k(s,a)\}}\big)^{1/2} %\sqrt{\frac{\log (S A (m+1) \pi^2 t_k^3 / 3 \delta)}{2 n}} 
	    	 \big\} \big\} 
	    	\le  \sum_{n = 1}^{t_k}  \frac{6
            \delta}{S A (m+1) \pi^2 t_k^3} =
            \frac{m6\delta}{SA(m+1)\pi^2t_k^2}
        \end{equation*}
where we have now written $N_k(s,a)$ for the number of visits in $(s,a)$ up to
episode $k$.
This proves \eqref{eq:confr} and \eqref{eq:confc}.

   Now, further union bounding over  all state-action pairs $(s,a)$ 
   %and all possible values
%of $n \in\{ 1,\dots, t_k\}$,
gives 
%us the following bound for the reward:
\begin{equation}
    	\textstyle \Pr \big\{  \cup_{n = 1}^{t_k}
            \cup_{s,a} \big\{ |\hat{r}_{k}(s,a) - \bar{r}(s,a)| \ge
            \epsilon_r(n) %\sqrt{\frac{\log (S A (m+1) \pi^2 t_k^3 / 3 \delta)}{2 n}} 
	    	\big\} \big\} \textstyle\le  \sum_{n = 1}^{t_k} \sum_{s,a} \frac{6
            \delta}{S A (m+1) \pi^2 t_k^3} =
            \frac{6\delta}{(m+1)\pi^2t_k^2}\label{eq:rlittlebd}
        \end{equation}
        for the reward. 
Analogously, taking a further union bound over all state-action pairs $(s,a)$ and all constraint costs $i\in[m]$, 
gives \begin{equation} \textstyle\Pr \big\{  \cup_{n = 1}^{t_k}
             \cup_{s,a,i} \big\{ |\hat{c}_{i,k}(s,a) - \bar{c}_i(s,a)| \ge
             \epsilon_r(n) %\sqrt{\frac{\log (S A (m+1) \pi^2 t_k^3 / 3 \delta)}{2 n}} 
	    	 \big\} \big\} 
	    	\le  \sum_{n = 1}^{t_k} \sum_{s,a,i} \frac{6
            \delta}{S A (m+1) \pi^2 t_k^3} =
            \frac{m6\delta}{(m+1)\pi^2t_k^2}\label{eq:clittlebd}
        \end{equation}
        for the constraint costs.
%and all values of $t_k\in \{1, \dots, \infty\}$,
%gives us the following bounds for the reward and each of the constraint costs
%respectively:
Summing \eqref{eq:rlittlebd} and \eqref{eq:clittlebd}, we get the first claim of
the lemma---i.e., 
\[\textstyle\Pr\{M\not\in\mathcal{M}_k\}\leq \frac{6\delta}{\pi^2 t_k^2}\]
Now, since  $\sum_{\ell=1}^\infty \frac{1}{\ell^2} = \frac{\pi^2}{6}$, if in
\eqref{eq:rlittlebd} and \eqref{eq:clittlebd}, we additionally union bounded
over all episodes $k\in \{1, \ldots, \infty\}$, we get that
	\begin{equation*}
	    	\textstyle \Pr \big\{ \cup_{t_k = 1}^{\infty} \cup_{n = 1}^{t_k}
            \cup_{s,a} \big\{ |\hat{r}_{k}(s,a) - \bar{r}(s,a)| \ge
            \epsilon_r(n) %\sqrt{\frac{\log (S A (m+1) \pi^2 t_k^3 / 3 \delta)}{2 n}} 
	    	\big\} \big\} \textstyle\le \sum_{t_k=1}^{\infty} \sum_{n = 1}^{t_k} \sum_{s,a} \frac{6 \delta}{S A (m+1) \pi^2 t_k^3} = \frac{\delta}{m+1}
	\end{equation*}
	and
	\begin{equation*}
	    	 \textstyle\Pr \big\{ \cup_{t_k = 1}^{\infty} \cup_{n = 1}^{t_k}
             \cup_{s,a,i} \big\{ |\hat{c}_{i,k}(s,a) - \bar{c}_i(s,a)| \ge
             \epsilon_r(n) %\sqrt{\frac{\log (S A (m+1) \pi^2 t_k^3 / 3 \delta)}{2 n}} 
	    	 \big\} \big\} 
	    	\le \sum_{t_k=1}^{\infty} \sum_{n = 1}^{t_k} \sum_{s,a,i} \frac{6 \delta}{S A (m+1) \pi^2 t_k^3} = \frac{m\delta}{m+1}
	\end{equation*}
so that
\[\textstyle \Pr\{M\not\in \mathcal{M}\}\leq \delta\]
which proves the final statement in the lemma.
    %Summing these error probabilities for the reward and costs, we obtain the union bound given by 
%	\begin{equation*}
%	    \Pr \{ 
%	|\hat{r}_{k}(s,a) - \bar{r}(s,a)|  \ge \epsilon_r %\sqrt{\frac{\log (S A (m+1) \pi^2 t_k^3 / 3 \delta)}{2 \max \{1, N_k(s,a)\}}}, \\
%	\land
%	|\hat{c}_{i,k}(s,a) - \bar{c}_i(s,a)|  \ge \epsilon_r\} %\sqrt{\frac{\log (S A (m+1) \pi^2 t_k^3 / 3 \delta)}{2 \max \{1, N_k(s,a)\}}}
%	%\end{aligned} \right\} 
%	\le \delta.
%	\end{equation*}
\end{proof}

Given that, for each episode, we can bound the gaps between the estimated reward (respectively, costs) and the mean reward (respectively, mean costs), with probability $1-\delta$, we can provide an assurance on {\CUCRL} being $\delta$-safe.
\begin{theorem}
{\CUCRL} is $\delta$-safe. 
\label{thm:deltasafe}
\end{theorem}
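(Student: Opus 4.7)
The plan is to leverage Lemma~\ref{lemma:unionbound} to argue that, on a high-probability good event, the optimistic cost upper bounds $\tilde{c}_{i,k}(s,a)$ really do dominate the true mean costs $\bar{c}_i(s,a)$ entry-wise, so that any feasible point of the robust LP \eqref{eq:RLP} is also feasible for the true CMDP. The executed policy at every time step is then guaranteed to satisfy the constraints on this event, which has probability at least $1-\delta$.

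The first step is to fix the good event $E=\{M\in\mathcal{M}\}$ and note that, by Lemma~\ref{lemma:unionbound}, $\Pr(E)\ge 1-\delta$. On $E$, the one-sided consequence of \eqref{eq:confc} gives
\[
\bar{c}_i(s,a)\;\le\;\hat{c}_{i,k}(s,a)+\left(\frac{\log(SA(m+1)\pi^2 t_k^3/3\delta)}{2\max\{1,N_k(s,a)\}}\right)^{1/2}
\]
for every $(s,a)$, every $i\in[m]$, and every episode $k$. Combining this with $\bar{c}_i(s,a)\le 1$ yields the pointwise domination $\bar{c}_i(s,a)\le \tilde{c}_{i,k}(s,a)$ directly from definition \eqref{eq:tildec}.

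The second step is to propagate this inequality through the RLP. By construction, $\tilde{y}_k$ satisfies $I_o\tilde{y}_k=P\tilde{y}_k$, $\mathbf{1}^\top\tilde{y}_k=1$, $\tilde{y}_k\ge 0$, and $\tilde{c}_{i,k}^\top\tilde{y}_k\le d_i$ for each $i$. Under the standing irreducibility/aperiodicity hypothesis for the chain induced by $\tilde{\pi}_k$, $\tilde{y}_k$ is the stationary occupation measure of $\tilde{\pi}_k$, so $C_i(\tilde{\pi}_k)=\bar{c}_i^\top\tilde{y}_k$. Using nonnegativity of $\tilde{y}_k$ and the pointwise domination from the first step,
\[
C_i(\tilde{\pi}_k)\;=\;\bar{c}_i^\top \tilde{y}_k\;\le\;\tilde{c}_{i,k}^\top \tilde{y}_k\;\le\;d_i
\]
for every $i\in[m]$ and every episode $k$. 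Together with the standing assumption that the baseline $\pi_0$ is itself safe, i.e. $C_i(\pi_0)\le d_i$, every policy executed by {\CUCRL} at every time step $t$, whether it is $\pi_0$ during an exploration phase or $\tilde{\pi}_k$ during an exploitation phase, satisfies the constraint on $E$. Since $\Pr(E)\ge 1-\delta$, $\delta$-safety follows.

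The main obstacle is not computational but structural: one must be sure that $\tilde{y}_k$ in fact \emph{exists} in every episode, i.e.\ that \eqref{eq:RLP} is feasible. This is exactly the feasibility issue the authors flag in the paragraph preceding the theorem; a fully rigorous treatment requires choosing $h$ and $\pi_0$ so that, with high probability, the baseline phase produces cost estimates loose enough to keep the feasible set nonempty (their heuristic extends the first exploration phase until some $y_0$ is feasible, after which monotonicity of the confidence radius does the rest). A secondary, lesser issue is the mild conceptual gap between the definition of $\delta$-safe, which is phrased in terms of the stationary-average cost of the executed policy at every step $t$, and the fact that the RLP controls the stationary occupation measure; this gap is closed by the irreducibility/aperiodicity assumption already invoked for $\pi_0$ and inherited by $\tilde{\pi}_k$.
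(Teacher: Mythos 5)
Your proof is correct and follows essentially the same route as the paper's: invoke Lemma~\ref{lemma:unionbound} to get $\bar{c}_i(s,a)\le\tilde{c}_{i,k}(s,a)$ on the good event, then use feasibility of $\tilde{y}_k$ in \eqref{eq:RLP} to conclude $C_i(\tilde{\pi}_k)=\bar{c}_i^\top\tilde{y}_k\le\tilde{c}_{i,k}^\top\tilde{y}_k\le d_i$. Your write-up is in fact slightly more careful than the paper's (handling the $\min\{\cdot,1\}$ clipping, the safety of $\pi_0$ during exploration phases, and the RLP feasibility caveat explicitly), but the underlying argument is the same.
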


\begin{proof}
	According to Lemma~\ref{lemma:unionbound}, with probability at least $1-\delta$, $\bar{c}_i(s, a) \le \tilde{c}_{i,k}(s,a)$. The occupation measure $\tilde{y}_k$ obtained at each episode via \eqref{eq:RLP}
	satisfies $\sum_{s,a} \tilde{c}_{i,k}(s,a) \tilde{y}_k(s,a) \le d_i$.
Hence,	 $C_i(\tilde{\pi}_k) = \sum_{s,a} \bar{c}_{i}(s,a) \tilde{y}_k(s,a) \le d_i$ with probability $1-\delta$.
\end{proof}

\subsection{Regret Analysis of {\CUCRL}}
Given that we have shown that {\CUCRL} is $\delta$-safe,  we now analyze the reward regret. In episode $k$ of {\CUCRL}, we execute a baseline policy $\pi_0$ for $h$ times and policy $\tilde{\pi}_k$ for $(k-1)h$ times. The pseudo-regret of episode $k$ is given by
\begin{equation*}
\Delta_k  = h [J(\bar{\pi}) - J(\pi_0)] + (k-1)h [J(\bar{\pi}) - J(\tilde{\pi}_k)] = h \bar{r}^\top (\bar{y} - y_0) + (k-1)h \bar{r}^\top (\bar{y} - \tilde{y}_k).
\end{equation*}

We first upper bound the per-step pseudo-regret of executing policy $\tilde{\pi}_k$, $\bar{r}^\top (\bar{y} - \tilde{y}_k)$, where the first term is the expected average reward under the optimal policy $\bar{\pi}$ and the second term is the sub-optimal expected average reward under policy $\tilde{\pi}_k$.

Using the confidence bounds in Lemma~\ref{lemma:unionbound}, define
	\begin{equation}
	\epsilon_r(s,a)= \textstyle\big(\frac{\log (S A (m+1)
    \pi^2 t_k^3 / 3 \delta)}{2 \max \{1,
    N_k(s,a)\}}\big)^{1/2},\label{eq:epsilonr}
\end{equation}
and $\epsilon_c(s,a)=\epsilon_r(s,a)$ for each state-action pair and let
$\epsilon_r$ and $\epsilon_c$ denote the vectors containing the values across
all state-action pairs\footnote{We note that it is
    possibel to define separate confidence bounds for the reward and constraint
    costs, however, for simplicity of the statement and proof of
    Lemma~\ref{lemma:unionbound}, we define them to be the same.}.
Define the following two linear programs:
\begin{align}
    &\textstyle\max_y \{ r^\top y | Ay=0, \mathbf{1}^\top y = 1, y \ge 0, c^\top y \le d \}    \label{eq:y1}\\
    &\textstyle\max_y \{ (r+\epsilon_r)^\top y | Ay=0, \mathbf{1}^\top y = 1, y \ge 0, (c+\epsilon_c)^\top y \le d \}.
    \label{eq:y2}
\end{align}
where  
	$0 \le r \le \mathbf{1}$, $0 \le c \le \mathbf{1}$, $\epsilon_r \ge 0$, and $\epsilon_c \ge 0$ hold element wise. 
\begin{lemma} 
	Assuming the domains of \eqref{eq:y1} and \eqref{eq:y2} are not empty, let $y_1$ and $y_2$ be  solutions for each of the problems, respectively.  
	If, for some constant $\alpha > 0$ and $\beta > 0$, there exist $y_0 \in \{y | Ay = 0, \mathbf{1}^\top y = 1, y \ge 0, (c+\epsilon_c)^\top y \le d \}$ such that $r^\top (y_1-y_0) = \alpha >0$ and $c^\top (y_1-y_0) = \beta >0$, then
$	r^\top (y_1 - y_2) \le \frac{2\alpha}{\beta} \| \epsilon_c \|_1 + \|\epsilon_r\|_1.$
	\label{lemma:pseudoregret}
\end{lemma}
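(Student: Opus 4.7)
The plan is to exploit the feasibility of the Slater-like point $y_0$ for \eqref{eq:y2} to construct an interpolation between $y_1$ and $y_0$ that is feasible for \eqref{eq:y2}, then use optimality of $y_2$ for \eqref{eq:y2} to trade the resulting perturbation against $\epsilon_r$ and $\epsilon_c$. The obstacle that $y_1$ may fail to satisfy $(c+\epsilon_c)^\top y_1\le d$ is precisely what the inflated denominator $\beta$ is designed to absorb.

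First I would define $y_\lambda = (1-\lambda) y_1 + \lambda y_0$ for $\lambda \in [0,1]$. Since $y_1$ and $y_0$ both satisfy $Ay=0$, $\mathbf{1}^\top y=1$, and $y\geq 0$, so does $y_\lambda$. The only constraint of \eqref{eq:y2} that can fail is the cost constraint, and a direct computation gives
\begin{equation*}
(c+\epsilon_c)^\top y_\lambda - d = (1-\lambda)\bigl((c+\epsilon_c)^\top y_1 - d\bigr) + \lambda\bigl((c+\epsilon_c)^\top y_0 - d\bigr).
\end{equation*}
Setting $\zeta = (c+\epsilon_c)^\top y_1 - d$ and $\eta = d - (c+\epsilon_c)^\top y_0 \ge 0$, this quantity is nonpositive once $\lambda \ge \lambda^\star := \max\{0,\zeta/(\zeta+\eta)\}$. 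Because $y_1$ is feasible for \eqref{eq:y1}, $c^\top y_1 \le d$, so $\zeta \le \epsilon_c^\top y_1 \le \|\epsilon_c\|_1$ (using $y_1\ge 0$ and $\mathbf{1}^\top y_1 = 1$). Similarly, $c^\top y_0 = c^\top y_1 - \beta \le d - \beta$, and $\epsilon_c^\top y_0 \le \|\epsilon_c\|_1$, so $\eta \ge \beta - \|\epsilon_c\|_1$.

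The next step is to show $\lambda^\star \le 2\|\epsilon_c\|_1/\beta$. I would split into two cases. If $\|\epsilon_c\|_1 \le \beta/2$, then $\eta \ge \beta/2 > 0$ and $\lambda^\star \le \zeta/\eta \le \|\epsilon_c\|_1/(\beta/2) = 2\|\epsilon_c\|_1/\beta$. If instead $\|\epsilon_c\|_1 > \beta/2$, then $2\|\epsilon_c\|_1/\beta > 1 \ge \lambda^\star$ trivially. In either case, $\lambda^\star \alpha \le (2\alpha/\beta)\|\epsilon_c\|_1$, and by construction $y_{\lambda^\star}$ is feasible for \eqref{eq:y2}.

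Finally, using that $y_2$ is optimal for \eqref{eq:y2} and $y_{\lambda^\star}$ is feasible,
\begin{equation*}
(r+\epsilon_r)^\top y_2 \ge (r+\epsilon_r)^\top y_{\lambda^\star},
\end{equation*}
so $r^\top y_2 \ge r^\top y_{\lambda^\star} + \epsilon_r^\top y_{\lambda^\star} - \epsilon_r^\top y_2 \ge r^\top y_{\lambda^\star} - \|\epsilon_r\|_1$, where I used $\epsilon_r\ge 0$, $y_{\lambda^\star}\ge 0$, and $\epsilon_r^\top y_2 \le \|\epsilon_r\|_\infty \le \|\epsilon_r\|_1$ since $y_2$ is a probability vector. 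Combining with $r^\top(y_1 - y_{\lambda^\star}) = \lambda^\star \alpha \le (2\alpha/\beta)\|\epsilon_c\|_1$ yields the claimed inequality. The main obstacle is the case analysis in bounding $\lambda^\star$, since $\eta$ can in principle be smaller than $\beta$; the two-case argument (with the factor of $2$ in the stated bound precisely allowing the split at $\beta/2$) is what makes it go through.
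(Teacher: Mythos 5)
Your proof is correct and follows essentially the same route as the paper's: your interpolant $y_{\lambda^\star}=(1-\lambda^\star)y_1+\lambda^\star y_0$ is exactly the paper's point $y_4=y_0+\gamma(y_1-y_0)$ with $\gamma=1-\lambda^\star$, and the two ingredients---the ratio identity $r^\top(y_1-y_{\lambda^\star})=(\alpha/\beta)\,c^\top(y_1-y_{\lambda^\star})$ and the $\epsilon_r$-perturbation comparison via optimality---are the same. The only cosmetic differences are that you bypass the paper's intermediate program (its $y_3$, the unperturbed objective over the tightened feasible set) by comparing $y_{\lambda^\star}$ directly against $y_2$, and you derive $\lambda^\star\le 2\|\epsilon_c\|_1/\beta$ via a case split at $\|\epsilon_c\|_1=\beta/2$ rather than by bounding $c^\top(y_1-y_4)\le 2\|\epsilon_c\|_1$ directly; both yield the identical constant.
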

%
%The intuition of the proof is to define $y_3$ as the solution to $\max_y \{ r^\top y | Ay=0, \mathbf{1}^\top y = 1, y \ge 0, (c+\epsilon_c)^\top y \le d \}$. We first find the upper bound of $r^\top (y_1 - y_3)$ and then find the bound of $r^\top (y_3 - y_2)$. By summing the absolute value of these two bounds, we have the bound of $r^\top (y_1 - y_2)$. For the full proof, please see Section 4, Lemma 4 in \citep{zheng2019constrained}.
\begin{proof}
	Let \[y_3 = \arg\max_y \{ r^\top y | Ay=0, \mathbf{1}^\top y = 1, y \ge 0,
    (c+\epsilon_c)^\top y \le d \}.\]
	We first find the upper bound of $r^\top (y_1- y_3)$ where we note that $y_3$ and $y_1$ are the solutions of same linear program over different domains. Since the domain of $y_3$ is smaller than $y_1$, we know that $r^\top (y_1- y_3) \ge 0$. 
	First, consider the trivial case that $y_1$ satisfies $(c+ \epsilon_c)^\top
    y_1 \le d$. In this case, $y_1 = y_3$ and $r^\top (y_1- y_3) = 0$. Now we
    only consider the case such that $(c+ \epsilon_c)^\top y_1 > d$. Note that
    $(c+ \epsilon_c)^\top y_0 \le d$. Hence, there exists a $\gamma \in [0,1)$
        such that $y_4 = y_0 + \gamma (y_1-y_0)$ and $(c+\epsilon_c)^\top y_4 =
        d$---i.e., $\gamma = (d - (c+\epsilon_c)^\top y_0)/((c+\epsilon_c)^\top
        (y_1-y_0))$. Further, we have \[y_1-y_4=y_1 - y_0 - \gamma(y_1-y_0)=(1-\gamma)(y_1-y_0),
        \] so that $c^\top  (y_1-y_4)= (1-\gamma) \beta >0$
	and 
	\begin{align}
	c^\top (y_1-y_4) & = (c+\epsilon_c)^\top (y_1-y_4) - \epsilon_c^\top
    (y_1-y_4)\\
    &= c^\top y_1 + \epsilon_c^\top  y_1 -d - \epsilon_c^\top (y_1-y_4) \\
	& \le d-d + \epsilon_c^\top y_1 -  \epsilon_c^\top (y_1-y_4)\\
    &\le \| \epsilon_c \|_1 \|y_1\|_\infty + \| \epsilon_c \|_1 \| y_1-y_4
    \|_\infty\\
    &= 2\|\epsilon_c\|_1
	\end{align}
Combining this bound with
\begin{equation}
\frac{r^\top (y_1-y_4)}{c^\top  (y_1-y_4)} =  \frac{r^\top (y_1-y_0)}{c^\top
(y_1-y_0)} = \frac{\alpha}{\beta},
\end{equation}
we have that \[\textstyle 0 < r^\top  (y_1-y_4) \le 2 \frac{\alpha}{\beta}
\|\epsilon_c\|_1.\] Since the domain for each of these problems is convex, we
know that  \[y_4 \in \{y | Ay=0, \mathbf{1}^\top  y = 1, y\ge 0,
(c+\epsilon_c)^\top y \le d \}.\] Due to optimality, $r^\top  y_3 \ge r^\top
y_4$ so that \[\textstyle r^\top  (y_1-y_3) \le 2 \frac{\alpha}{\beta} \|\epsilon_c\|_1.\]

We leverage the bounud on  $r^\top (y_1 - y_3)$ to obtain a   bound on $r^\top
(y_3 - y_2)$. Note that $y_3$ and $y_2$ are the solutions of two linear programs
with different objectives but the same domain. According to optimality of the
solutions, we know that $r^\top  y_3 \ge r^\top  y_2$ and $(r+ \epsilon_r)^\top
y_2 \ge (r + \epsilon_r)^\top  y_3$. Combining these facts, we have that
	\begin{equation}
	0 \le r^\top  (y_3 - y_2) \le \epsilon_r^\top  (y_2 - y_3) \le \|\epsilon_r\|_1 \|y_2 - y_3\|_\infty \le \|\epsilon_r\|_1
	\end{equation}
	
	Now, combining the bounds on $r^\top  (y_3 - y_2)$ and $r^\top  (y_1 - y_3)$, we have that
	\begin{equation}
	\textstyle r^\top  (y_1 - y_2) = r^\top  (y_1 - y_3) + r^\top  (y_3 - y_2)
    \le \frac{2\alpha}{\beta} \| \epsilon_c \|_1 + \|\epsilon_r\|_1. %\le \max\{\frac{2\alpha}{\beta}, 1\} \cdot (\|\epsilon_c\|_1 + \|\epsilon_r\|_1)
	\end{equation}
	%where $const = \max\{\frac{2\alpha}{\beta}, 1\}$.
	%\lily{As I have mentioned before. i dont think you should use 'const'. you define this constant. there is no reason not to plug it in here. it looks bad asthetically...}
\end{proof}

We can use the preceding lemma to get a  bound on the pseudo-regret. 

\begin{proposition} \label{thm:pseudoregret}
	Denote $\mathcal{Y} = \{y | (I_o - P)y = 0, \mathbf{1}^\top y = 1, y \ge 0 \}$. If there exists $y_0 \in \mathcal{Y}$ such that $\bar{r}^\top (\bar{y}-y_0) = \alpha >0, \bar{c}^\top(\bar{y}-y_0) = \beta >0$, then with probability at least $1-\delta$,
	\begin{equation}
\textstyle	\bar{r}^\top (\bar{y} - \tilde{y}_k) \le 2 (\frac{2\alpha m}{\beta} + 1)  \sum_{s, a} \big(\frac{\log (S A (m+1) \pi^2 t_k^3 / 3 \delta)}{2 \max \{1, N_k(s,a)\}}\big)^{1/2}.
	\end{equation}
\end{proposition}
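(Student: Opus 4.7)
The plan is to condition on the good event $\{M\in\mathcal{M}\}$, which by Lemma~\ref{lemma:unionbound} holds with probability at least $1-\delta$, and then reduce the claim to a multi-constraint extension of Lemma~\ref{lemma:pseudoregret}. On this good event, the clipping in \eqref{eq:tilder} and \eqref{eq:tildec} combined with the confidence bounds gives $\bar{r}(s,a)\le \tilde{r}_k(s,a)\le \bar{r}(s,a)+2\epsilon_r(s,a)$ and, for each $i\in[m]$, $\bar{c}_i(s,a)\le \tilde{c}_{i,k}(s,a)\le \bar{c}_i(s,a)+2\epsilon_c(s,a)$. Defining the nonnegative shifts $\epsilon_r'=\tilde{r}_k-\bar{r}$ and $\epsilon_{c,i}'=\tilde{c}_{i,k}-\bar{c}_i$, the RLP iterate satisfies $\tilde{y}_k=\arg\max\{(\bar{r}+\epsilon_r')^\top y:\,y\in\mathcal{Y},\,(\bar{c}_i+\epsilon_{c,i}')^\top y\le d_i,\,\forall i\}$, which puts us exactly in the setting of Lemma~\ref{lemma:pseudoregret} with $y_1=\bar{y}$, $y_2=\tilde{y}_k$, and $m$ cost constraints rather than one.

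To adapt the proof of Lemma~\ref{lemma:pseudoregret}, I would insert the intermediate optimizer $y_3=\arg\max\{\bar{r}^\top y:\,y\in\mathcal{Y},\,(\bar{c}_i+\epsilon_{c,i}')^\top y\le d_i,\,\forall i\}$ and decompose $\bar{r}^\top(\bar{y}-\tilde{y}_k)=\bar{r}^\top(\bar{y}-y_3)+\bar{r}^\top(y_3-\tilde{y}_k)$. The objective-perturbation piece is bounded exactly as in the single-constraint argument: optimality of $y_3$ and $\tilde{y}_k$ in their respective LPs over the same feasible set yields $\bar{r}^\top(y_3-\tilde{y}_k)\le (\epsilon_r')^\top(\tilde{y}_k-y_3)\le\|\epsilon_r'\|_1\le 2\|\epsilon_r\|_1$. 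For the domain-shrinking piece, I would form $y_4=y_0+\gamma(\bar{y}-y_0)$ with $\gamma=\min_i\gamma_i\in[0,1)$, where $\gamma_i$ is the largest scalar that keeps the $i$-th perturbed constraint satisfied; the constraint attaining the minimum, call it $i^\star$, is then tight while all others remain feasible, so $y_4$ is admissible for the $y_3$-LP. Reusing the single-constraint arithmetic on $i^\star$ gives $\bar{c}_{i^\star}^\top(\bar{y}-y_4)\le 2\|\epsilon_{c,i^\star}'\|_1$, and the identity $\bar{y}-y_4=(1-\gamma)(\bar{y}-y_0)$ yields the ratio $\bar{r}^\top(\bar{y}-y_4)/\bar{c}_{i^\star}^\top(\bar{y}-y_4)=\alpha/\beta_{i^\star}\le\alpha/\beta$. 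To absorb the unknown index $i^\star$ into the stated $m$-factor, I would crudely dominate $\|\epsilon_{c,i^\star}'\|_1\le\sum_{i=1}^m\|\epsilon_{c,i}'\|_1\le 2m\|\epsilon_c\|_1$, and optimality of $y_3$ over the feasible point $y_4$ transports this into $\bar{r}^\top(\bar{y}-y_3)\le(4\alpha m/\beta)\|\epsilon_c\|_1$.

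Summing the two contributions and using $\epsilon_r(s,a)=\epsilon_c(s,a)$ produces $\bar{r}^\top(\bar{y}-\tilde{y}_k)\le 2\bigl(2\alpha m/\beta+1\bigr)\sum_{s,a}\epsilon_r(s,a)$, and substituting \eqref{eq:epsilonr} matches the claimed bound, with the $1-\delta$ probability inherited directly from the good event in Lemma~\ref{lemma:unionbound}. The step I expect to be the main obstacle is the multi-constraint adaptation: one must verify that the $\gamma=\min_i\gamma_i$ construction simultaneously preserves feasibility of $y_4$ under every perturbed constraint and keeps it in $\mathcal{Y}$, and that $y_0$ itself lies in the shifted feasible set $\{y\in\mathcal{Y}:(\bar{c}_i+\epsilon_{c,i}')^\top y\le d_i\,\forall i\}$, which is what makes the hypothesis $\bar{c}^\top(\bar{y}-y_0)=\beta>0$ actionable after the cost perturbation.
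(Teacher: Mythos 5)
Your argument is correct and reaches the stated bound, but it takes a genuinely different route from the paper. The paper handles the $m$ cost constraints by telescoping through a chain of intermediate linear programs $y^{(1)},\dots,y^{(m)}$, perturbing one constraint at a time and invoking the single-constraint argument of Lemma~\ref{lemma:pseudoregret} $m$ times, so the factor $m$ arises from summing $m$ separate $\frac{2\alpha}{\beta}\|\epsilon_c\|_1$ contributions (and, strictly speaking, each intermediate step would need its own ratio $\alpha_j/\beta_j$ for the pair $(y^{(j-1)},y_0)$, which the paper does not verify). You instead perturb all constraints simultaneously, introduce a single intermediate optimizer $y_3$ over the fully perturbed feasible set, and handle the feasible-set shrinkage in one shot via $\gamma=\min_i\gamma_i$, applying the tightness arithmetic only to the binding constraint $i^\star$. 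This is cleaner, avoids the unverified per-step ratios, and--had you bounded $\|\epsilon_{c,i^\star}'\|_1\le 2\|\epsilon_c\|_1$ directly rather than dominating by the sum over all $i$--would actually yield the sharper bound $2(\frac{2\alpha}{\beta}+1)\sum_{s,a}\epsilon_r(s,a)$ with no factor of $m$; your crude domination is only there to match the proposition as stated. Both proofs share the same unstated hypothesis, namely that $y_0$ lies in the \emph{perturbed} feasible set $\{y\in\mathcal{Y}:(\bar{c}_i+\epsilon_{c,i}')^\top y\le d_i,\ \forall i\}$ (the proposition only assumes $y_0\in\mathcal{Y}$); you flag this explicitly, and it is indeed the assumption the paper makes informally in Section~\ref{sec:algo} when it requires the baseline policy to keep \eqref{eq:RLP} feasible in every episode. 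The only other caveat, shared with the paper, is the interpretation of $\bar{c}^\top(\bar{y}-y_0)=\beta>0$ as a componentwise statement so that $\beta_{i^\star}\ge\beta$ and the ratio $\alpha/\beta_{i^\star}\le\alpha/\beta$ is valid.
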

\begin{proof}
	By definition \[\bar{y} = \arg\max_y \{ \bar{r} ^\top y | (I_o - P) y=0,
        \mathbf{1}^\top y = 1, y \ge 0, \bar{c}_i^\top y \le d_i, i \in [m]\}\]
        and \[\tilde{y}_k = \arg\max_y \{ \tilde{r}_k^\top y | (I_o - P) y=0,
            \mathbf{1}^\top y = 1, y \ge 0, \tilde{c}_{i,k}^\top y \le d_i, i
        \in [m]\}.\] Define a sequence of subproblems by adding the confidence value to one additional constraint 
        at a time as follows:
	\begin{equation*}
	\begin{aligned}
	y^{(1)} & = \arg\max_y \{ \bar{r} ^\top y | (I_o - P) y=0, \mathbf{1}^\top y
= 1, y \ge 0, \tilde{c}_{1,k}^\top y \le d_1, \bar{c}_i^\top y \le d_i, i \in
\{2, \dots, m\}\} \\
	y^{(2)} & = \arg\max_y \{ \bar{r} ^\top y | (I_o - P) y=0, \mathbf{1}^\top y
= 1, y \ge 0, \tilde{c}_{1,k}^\top y \le d_1, \tilde{c}_{2,k}^\top y \le d_2,
\bar{c}_i^\top y \le d_i, i \in \{3, \dots, m\}\}  \\
	& \ \ \vdots \\
	y^{(m)} & = \arg\max_y \{ \bar{r} ^\top y | (I_o - P) y=0, \mathbf{1}^\top y
    = 1, y \ge 0, \tilde{c}_{i,k}^\top y \le d_i,  i \in \{1, \dots, m\}\}
	\end{aligned}
	\end{equation*}
	
	Using the same proof technique as for that of
    Lemma~\ref{lemma:pseudoregret}, we obtain the bounds for each of the
    subproblems
    \[\bar{r}^\top (\bar{y} - y^{(1)}), \bar{r}^\top (y^{(1)} - y^{(2)}), \dots,
        \bar{r}^\top (y^{(m-1)} - y^{(m)}), \bar{r}^\top (y^{(m)} -
    \tilde{y}_{k}).\] 
    Combining each of the bounds and the fact that \[\textstyle
        |\tilde{r}_k(s,a) - \bar{r}(s,a)| \le 2 \sqrt{\frac{\log (S A (m+1) \pi^2
        t_k^3 / 3 \delta)}{2 \max \{1, N_k(s,a)\}}},\]
        and \[\textstyle |\tilde{c}_{i,k}(s,a) - \bar{c}_i(s,a)| \le 2
            \sqrt{\frac{\log (S A (m+1) \pi^2 t_k^3 / 3 \delta)}{2 \max \{1,
            N_k(s,a)\}}},\] 
            we have that
	\begin{align*}
\textstyle	\bar{r}^\top (\bar{y} - \tilde{y}_{k}) \textstyle = \bar{r}^\top (\bar{y} - y^{(1)}) + \dots + \bar{r}^\top (y^{(m)} - \tilde{y}_{k}) &\le\textstyle m \frac{2\alpha}{\beta} \| \tilde{c}_k - \bar{c} \|_1 + \| \tilde{r}_k - \bar{r} \|_1 \\
	&\textstyle\le 2 (\frac{2\alpha m}{\beta} + 1) \sum_{s,a} \sqrt{\frac{\log (S A (m+1) \pi^2 t_k^3 / 3 \delta)}{2 \max \{1, N_k(s,a)\}}}
	\end{align*}
	which completes the proof.
\end{proof}

%The proof is constructed by constructing a sequence of subproblems in which one constraint is adding  at a time. Lemma~\ref{lemma:pseudoregret} provides the bound of two solutions to two LPs, with one constraint difference. Here, we construct a sequence of subproblems and each two subproblems differ from one constraint. Applying Lemma~\ref{lemma:pseudoregret}, we have bounds for each of the two solution pairs. Summing the respective bounds, we have the bound of $\bar{r}^\top (\bar{y} - \tilde{y}_k)$. For the full proof, please see Section 4, Proposition 5 in \citep{zheng2019constrained}.

Note that according to Proposition~\ref{thm:pseudoregret}, with probability at least $1-\delta$,
the per-step pseudo-regret of executing policy $\tilde{\pi}_k$ depends on the confidence intervals of reward and costs of all state-action pairs. This is intuitive since in order for the policy $\tilde{\pi}_k$ to be close to the optimal policy $\bar{\pi}$, we need to have good approximations of the reward and costs for all state-action pairs. To ensure this, we need to constantly explore the CMDP so that $N_k(s,a)$ is not `too small' for any state-action pair. Since the Markov chain resulting from the baseline policy is irreducible and aperiodic, the steady state occupation measure $y_0(s,a)$ corresponding to the baseline policy $\pi_0(a|s)$ has the property that $y_0(s,a) > 0, \forall s, a$.
Due to this universal exploration demand, we execute the baseline policy $\pi_0$ for a constant number of times in each linear increasing episode in the {\CUCRL} algorithm.

To have a upper bound on the regret derived in Proposition~\ref{thm:pseudoregret}, we need to have a lower bounds on $N_k(s,a)$. Given our assumptions on the baseline policy as discussed above, define $\rho>0$ such that $y_0(s,a) \ge \rho > 0$  for all state-action pairs $(s,a)\in \mathcal{S}\times\mathcal{A}$.
The following lemma gives a lower bound on the number of times each state-action pair is visited in episode $k$.

\begin{lemma}  \label{lemma:visitcount}
Given a fixed total number of episodes $K$,	with probability at least
$1-\delta$, for every state-action pair $(s,a)$ and episode $k\in [K]$,
	\begin{equation}
\textstyle	N_k(s,a) \ge (k-1) \rho h - (k-1) \big(72\xi \rho h \log\big(\frac{\varphi \cdot SAK}{\delta}\big)\big)^{1/2}
	\end{equation}
	where $\xi$ the mixing time of the Markov chain induced by policy $\pi_0$, $\rho>0$ is such that $y_0(s,a) \ge \rho > 0$  for all state-action pairs $(s,a)\in \mathcal{S}\times\mathcal{A}$, and 
	$\varphi = \sum _{s,a} \frac{y'(s,a)^2}{y_0(s,a)} $, where $y'$ is the initial state action distribution and $y_o$ is the steady state action distribution under the baseline policy.
\end{lemma}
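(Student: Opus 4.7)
The plan is to lower-bound $N_k(s,a)$ by ignoring visits during the executions of $\tilde{\pi}_j$ (which contribute non-negatively) and accounting only for the $h$-step baseline blocks executed in episodes $1,\ldots,k-1$. Since $\pi_0$ induces an irreducible aperiodic Markov chain whose stationary occupation measure is $y_0$ with $y_0(s,a)\geq \rho$, each baseline block of length $h$ contributes, in expectation, at least $\rho h$ visits to $(s,a)$, and summing across $k-1$ blocks yields the target leading term $(k-1)\rho h$. The $h$-step blocks are not i.i.d.\ trajectories of the chain because the initial state of each block depends on the history (in particular on the preceding execution of $\tilde{\pi}_{j-1}$); this is why the bound carries a $\chi^2$-divergence-like factor $\varphi=\sum_{s,a} y'(s,a)^2/y_0(s,a)$ measuring the worst-case mismatch between the initial distribution $y'$ at the start of a block and the stationary distribution $y_0$.

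Next, I would apply a Markov chain concentration inequality of Hoeffding/Bernstein type (in the style of Chung--Lam--Liu--Mitzenmacher, or Paulin) to the visit count $N^{(j)}(s,a)$ within a single $h$-step baseline block. Such an inequality yields, with probability at least $1-\delta/(SAK)$,
\begin{equation*}
N^{(j)}(s,a) \;\geq\; y_0(s,a)\,h \;-\; \sqrt{c\,\xi\,y_0(s,a)\,h\,\log\!\bigl(\varphi\cdot SAK/\delta\bigr)}
\end{equation*}
for an absolute constant $c$, with the $\xi$-dependence coming from the pseudo-spectral gap / mixing time of the baseline chain and the $\varphi$-dependence from the initial-distribution correction term. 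Using $y_0(s,a)\geq \rho$ and absorbing the constant then yields a per-block lower bound of $\rho h-\sqrt{72\,\xi\,\rho\,h\,\log(\varphi\cdot SAK/\delta)}$, with $72$ chosen to match the particular inequality invoked.

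I would then sum the per-block bounds over $j=1,\ldots,k-1$ and union-bound the associated failure events over all $SA$ state-action pairs and all $K$ episodes, using the fact that the baseline visits in these $k-1$ blocks are a valid lower bound on $N_k(s,a)$. This yields
\begin{equation*}
N_k(s,a)\;\geq\;(k-1)\rho h \;-\;(k-1)\sqrt{72\,\xi\,\rho\,h\,\log\!\bigl(\varphi\cdot SAK/\delta\bigr)}
\end{equation*}
simultaneously for all $(s,a)\in\mathcal{S}\times\mathcal{A}$ and all $k\in[K]$ with total failure probability at most $\delta$. The factor $(k-1)$ (rather than the tighter $\sqrt{k-1}$ that variance aggregation would give) reflects the looseness of summing high-probability per-block deviations via a union bound; this is enough for the subsequent regret argument.

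The main obstacle is the per-block Markov chain concentration step. The consecutive baseline blocks are not independent of each other, the $h$-step chain has not necessarily mixed, and the initial distribution at the start of each block is history-dependent, so a clean i.i.d.\ Chernoff bound is unavailable. What is required is an $h$-step deviation inequality with explicit dependence on both the mixing time $\xi$ and the initial-distribution mismatch $\varphi$; this is precisely what results like Paulin's Hoeffding inequality for Markov chains (or the Chung--Lam--Liu--Mitzenmacher bound) are designed to provide, but tracking the constants so that the $\sqrt{72}$ coefficient emerges cleanly, and handling the fact that $y'$ is itself a random (history-dependent) distribution for which one worst-cases via $\varphi$, are the delicate parts. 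Once the per-block inequality is in hand, the remainder is routine summation and union-bounding.
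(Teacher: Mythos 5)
Your proposal follows essentially the same route as the paper: discard the visits accrued under $\tilde{\pi}_j$, apply the Chung--Lam--Liu--Mitzenmacher Markov-chain Chernoff--Hoeffding bound (with its $\xi$ and $\varphi$ dependence and the constant $72$) to the visit count in each $h$-step baseline block, lower-bound $y_0(s,a)$ by $\rho$, and union-bound over state-action pairs and episodes before summing the $k-1$ per-block bounds. The approach and all key steps match; if anything, you are more explicit than the paper about the history-dependence of each block's initial distribution being absorbed into $\varphi$.
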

\begin{proof}
	Consider the exploration phase (when the baseline policy $\pi_0$ is
    executed) of the $k$-th episode in Algorithm~\ref{alg:safeUCRL}. For a given episode $\ell$ and for a fixed
    state-action pair $(s,a)$, let $X_{\ell,1}, \dots, X_{\ell,h}$ be the indicator variables
    of whether state-action pair $(s,a)$ has be selected at each step within the episode $\ell$. Let $Y_\ell =
    \sum_{i=1}^hX_{\ell,i}$ and thus $\mathbb{E}[Y_\ell] = y_0(s,a)h$. Applying the
    Chernoff-Hoeffding bound in \citep[Theorem 3]{chung2012chernoff}, gives 
	\begin{equation}
	\textstyle\Pr \{ \mathbb{E}[Y_\ell] - Y_\ell \ge \epsilon y_0(s,a) h \} \le \varphi  \cdot
    \exp\left(-\frac{\epsilon^2 y_0(s,a) h}{72 \xi}\right).
	\end{equation}
	Setting
	\begin{equation}
\textstyle	\epsilon = \sqrt{\frac{72\xi}{y_0(s,a)h} \log(\frac{\varphi 
SAK}{\delta})},
	\end{equation}
    the above bound becomes 	\begin{equation}
\textstyle	\Pr \left\{ Y_\ell \le y_0(s,a)h - \sqrt{72\xi y_0(s,a)h \log(\frac{\varphi SAK}{\delta})} \right\} \le \frac{\delta}{SAK}
	\end{equation}
	Using the assumption that $y_0(s,a) \ge \rho > 0, \forall (s,a)$, the union
    bound over all state-action pairs $(s,a)$ and episodes $k\in [K]$ is given by
	\begin{equation}
\textstyle	\Pr \left\{ \bigcup_{(s,a), \ell} \left\{ Y_\ell \le \rho h - \sqrt{72\xi
    \rho h \log(\frac{\varphi SAK}{\delta})} \right\} \right\} \le \sum_{\ell=1}^K\sum_{s,a} \frac{\delta}{SAK} = \delta
	\end{equation}
	Now, we note that
	\[\textstyle\left\{\sum_{\ell=1}^{k-1} Y_\ell\leq (k-1)\left(\rho h-\sqrt{72\xi
    \rho h \log(\frac{\varphi SAK}{\delta})}\right) \right\}\subset \bigcup_{\ell=1}^{k-1} \left\{ Y_\ell \le \rho h - \sqrt{72\xi
    \rho h \log(\frac{\varphi SAK}{\delta})} \right\} \]
    and $N_k(s,a) \ge  \sum_{\ell=1}^{k-1}Y_\ell$ since in each episode $\tilde{\pi}_\ell$ is executed $h\ell$ times after the baseline policy so that $N_\ell(s,a)$ may be larger. Hence,
%	Simply summing over the first $k$ episodes, we have
	\begin{equation}
\textstyle	N_k(s,a) \ge (k-1) \rho h - (k-1) \sqrt{72\xi \rho h \log(\frac{\varphi SAK}{\delta})}
	\end{equation}
	holds with probability at least $1-\delta$.

\end{proof}

%The proof follows by apply union bound over states, actions, and episodes using the inequality in \citep[Theorem 3]{chung2012chernoff}. For a more detailed proof, please see Section 4, Lemma 6 in \citep{zheng2019constrained}.
%

Combining Proposition~\ref{thm:pseudoregret} and Lemma~\ref{lemma:visitcount} and summing over $K$ episodes, we  obtain the total regret bound for {\CUCRL}.  
%For a detailed proof of the theorem below, please see Section 4, Theorem 7 in \citep{zheng2019constrained}.
\begin{theorem}
Suppose that $\delta \le \varphi SAK \exp(-\frac{\rho h}{288 \xi})$.	Under the assumptions of Proposition~\ref{thm:pseudoregret}, with probability at least $1-\delta$, {\CUCRL} has total pseudo-regret 
	%\begin{equation}
	$\Delta(T) = O(T^{\frac{3}{4}}\sqrt{\log(T/\delta)})$.
%	\end{equation}
	\label{thm:regret}
\end{theorem}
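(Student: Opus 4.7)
The plan is to sum the per-episode pseudo-regret $\Delta_k = h\bar{r}^\top(\bar{y}-y_0) + (k-1)h\,\bar{r}^\top(\bar{y}-\tilde{y}_k)$ over the $K$ episodes and then convert the dependence on $K$ into a dependence on the horizon via $T=\sum_{k=1}^K kh = hK(K+1)/2$, so that $K=\Theta(\sqrt{T/h})$. The first summand is bounded by $h$ since $\bar{r}\in [0,1]^{SA}$ and $\mathbf{1}^\top y_0=1$, and therefore the baseline contribution to the total regret is $Kh=O(\sqrt{T})$, which will ultimately be dominated by the second summand.

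For the harder second summand I would apply Proposition~\ref{thm:pseudoregret} to obtain
\begin{equation*}
\bar{r}^\top(\bar{y}-\tilde{y}_k)\le C\sum_{s,a}\sqrt{\tfrac{\log(SA(m+1)\pi^2 t_k^3/(3\delta))}{2\max\{1, N_k(s,a)\}}},
\end{equation*}
with $C=2(2\alpha m/\beta+1)$, and then replace the data-dependent visit counts by the lower bound from Lemma~\ref{lemma:visitcount}. Reading the hypothesis on $\delta$ as the equivalent statement $\rho h\ge 288\xi\log(\varphi SAK/\delta)$ is exactly what forces the square-root correction in Lemma~\ref{lemma:visitcount} to be at most $\rho h/2$, so that on the high-probability event one has $N_k(s,a)\ge (k-1)\rho h/2$ uniformly in $(s,a)$.

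Substituting this lower bound gives, for $k\ge 2$,
\begin{equation*}
(k-1)h\,\bar{r}^\top(\bar{y}-\tilde{y}_k) \le C\,SA\sqrt{\tfrac{(k-1)h\log(SA(m+1)\pi^2 t_k^3/(3\delta))}{\rho}}.
\end{equation*}
Since $t_k\le T$, the logarithmic factor is uniformly $O(\log(T/\delta))$, and $\sum_{k=1}^K \sqrt{k-1}=O(K^{3/2})$, so the second-term contribution to the total regret is $O(K^{3/2}\sqrt{h\log(T/\delta)})$. Plugging in $K=\Theta(\sqrt{T/h})$ gives $K^{3/2}\sqrt{h}=\Theta(T^{3/4}h^{-1/4})$; treating $h$ as fixed then yields $\Delta(T)=O(T^{3/4}\sqrt{\log(T/\delta)})$, which dominates the $O(\sqrt{T})$ baseline contribution and matches the stated rate.

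The main obstacle I anticipate is coordinating the high-probability events of Lemma~\ref{lemma:unionbound} (driving Proposition~\ref{thm:pseudoregret}) with those of Lemma~\ref{lemma:visitcount}, so that one union bound yields a single $1-\delta$ guarantee; this will require rescaling each intermediate $\delta$ by a small constant factor and checking that the induced changes to the logarithm only affect the hidden constant. A secondary bookkeeping point is verifying that the condition $\rho h\ge 288\xi\log(\varphi SAK/\delta)$ can be met uniformly for all $K$ encountered by choosing $h$ as a constant depending on $\xi,\rho,\varphi,SA$ and $\log(1/\delta)$ but independent of $T$, so that $h$ can legitimately be pulled out of the final asymptotics.
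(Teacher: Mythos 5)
Your proposal follows essentially the same route as the paper's proof: the identical episode decomposition, a trivial $O(hK)=O(\sqrt{T})$ bound on the baseline term, Proposition~\ref{thm:pseudoregret} combined with the visit-count lower bound of Lemma~\ref{lemma:visitcount} for the main term, the bound $\sum_{k}\sqrt{k}=O(K^{3/2})$, and the conversion $K=\Theta(\sqrt{T/h})$ from $T=\sum_k kh$. The only point to flag is your claim that the hypothesis is ``equivalent'' to $\rho h\ge 288\xi\log(\varphi SAK/\delta)$: the stated condition $\delta\le\varphi SAK\exp(-\frac{\rho h}{288\xi})$ actually rearranges to $\rho h\le 288\xi\log(\varphi SAK/\delta)$, i.e.\ the reverse inequality. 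The direction you use is the one the argument genuinely needs (it is what yields $N_k(s,a)\ge(k-1)\rho h/2$), and the paper's own proof contains the same inconsistency --- it derives $\zeta\le\frac{1}{2}\rho h$ from the stated hypothesis and then applies it as if $\zeta\ge\frac{1}{2}\rho h$ when replacing $1/(2\zeta)$ by $1/(\rho h)$ --- so this appears to be a sign typo in the theorem's hypothesis rather than a gap in your reasoning; just do not present the reversed condition as equivalent to the one stated.
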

\begin{proof}
	According to Proposition~\ref{thm:pseudoregret}, the total regret of $K$ episodes is 
	\begin{align*}
	\textstyle	\sum_{k=1}^K \Delta_k &\textstyle = \sum_{k=1}^K h \bar{r}^\top  (\bar{y} - y_0) + (k-1)h \bar{r}^\top  (\bar{y} - \tilde{y}_k)\\
	&=\textstyle h K \bar{r}^\top  (\bar{y} - y_0) + h \sum_{k=2}^K (k-1) \bar{r}^\top  (\bar{y} - \tilde{y}_k) \\
		& \textstyle\le 2hK + 2 (\frac{2\alpha m}{\beta} + 1) h \sum_{k=2}^K (k-1) \sum_{s, a} \sqrt{\frac{\log (S A (m+1) \pi^2 t_k^3 / 3 \delta)}{2 \max \{1, N_k(s,a)\}}}
	\end{align*}
	Let 
\[\textstyle\zeta=\rho h - \left(72\xi \rho h \log(\frac{\varphi  SAK}{\delta})\right)^{1/2}.\]  %\textcolor{blue}{Assume \left(72\xi \rho h \log(\frac{\varphi  SAK}{\delta})\right)^{1/2} \ge \frac{1}{2} \rho h} 
Since $\delta \le \varphi SAK \exp(-\frac{\rho h}{288 \xi})$, we have that  \[\left(72\xi \rho h \log(\frac{\varphi  SAK}{\delta})\right)^{1/2} \ge \frac{1}{2} \rho h\] so that  
	$\zeta \le \frac{1}{2} \rho h$.
	
	Combining this with Lemma \ref{lemma:visitcount}, we have that
	\begin{align*}
		\Delta(T)  =\textstyle \sum_{k=1}^K \Delta_k  &\textstyle\le 2hK + 2 (\frac{2\alpha m}{\beta} + 1) h \sum_{k=2}^K (k-1) \sum_{s, a} \left(\frac{\log (S A (m+1) \pi^2 t_k^3 / 3 \delta)}{2 \max \{1, N_k(s,a)\}}\right)^{1/2} \\
		& \textstyle\le 2hK + 2 (\frac{2\alpha m}{\beta} + 1) hSA \sum_{k=2}^K (k-1) \left(\frac{\log (SA (m+1) \pi^2 t_k^3 / 3 \delta)}{2 (k-1)\zeta}\right)^{1/2} \\
		& \textstyle\le 2hK + 2 (\frac{2\alpha m}{\beta} + 1) hSA \sqrt{\log (SA (m+1) \pi^2 T^3 / 3 \delta)} \sum_{k=1}^{K-1} \sqrt{\frac{k}{2\zeta}} \\
		& \textstyle= 2hK + 2 (\frac{2\alpha m}{\beta} + 1) hSA \sqrt{\frac{\log (SA (m+1) \pi^2 T^3 / 3 \delta)}{2\zeta}} \sum_{k=1}^{K-1} \sqrt{k} \\
		& \textstyle\le 2hK + 2 (\frac{2\alpha m}{\beta} + 1) hSA \sqrt{\frac{\log (SA (m+1) \pi^2 T^3 / 3 \delta)}{\rho h}} \sum_{k=1}^{K-1} \sqrt{k} \\
		& \textstyle\le 2hK + 2 (\frac{2\alpha m}{\beta} + 1) hSA \sqrt{\frac{\log (SA (m+1) \pi^2 T^3 / 3 \delta)}{\rho h}} (K-1) \left(\frac{K}{2}\right)^{1/2} \\
		&\textstyle = O(K) + O(K\sqrt{K\log(T/\delta)}) \\
		& \le O(T^{\frac{3}{4}}\sqrt{\log(T/\delta)})
	\end{align*}
	where the second to last inequality follows from Jensen's inequality and the final step follows from $T = \sum_{k=1}^K kh = \frac{K(K-1)}{2} h$ so that $K < (2T/h)^{1/2}$.

%	Combining this with Lemma \ref{lemma:visitcount}, we have
%	\begin{align*}
%	\Delta(T)  =\textstyle \sum_{k=1}^K \Delta_k  &\textstyle\le 2hK + 2 (\frac{2\alpha m}{\beta} + 1) h \sum_{k=2}^K (k-1) \sum_{s, a} \left(\frac{\log (S A (m+1) \pi^2 t_k^3 / 3 \delta)}{2 \max \{1, N_k(s,a)\}}\right)^{1/2} \\
%	& \textstyle\le 2hK + 2 (\frac{2\alpha m}{\beta} + 1) hSA \sum_{k=2}^K (k-1) \left(\frac{\log (SA (m+1) \pi^2 t_k^3 / 3 \delta)}{2 (k-1)\zeta}\right)^{1/2} \\
%	& \textstyle\le 2hK + 2 (\frac{2\alpha m}{\beta} + 1) hSA \sqrt{\log (SA (m+1) \pi^2 T^3 / 3 \delta)} \sum_{k=1}^{K-1} \sqrt{\frac{k}{2\zeta}} \\
%	& \textstyle= 2hK + 2 (\frac{2\alpha m}{\beta} + 1) hSA \sqrt{\frac{\log (SA (m+1) \pi^2 T^3 / 3 \delta)}{2\zeta}} \sum_{k=1}^{K-1} \sqrt{k} \\
%	& \textstyle\le 2hK + 2 (\frac{2\alpha m}{\beta} + 1) hSA \sqrt{\frac{\log (SA (m+1) \pi^2 T^3 / 3 \delta)}{2\zeta}} (K-1) \left(\frac{K}{2}\right)^{1/2} \\
%	&\textstyle = O(K) + O(K\sqrt{K\log(T/\delta)}) \\
%	& \le O(T^{\frac{3}{4}}\sqrt{\log(T/\delta)})
%	\end{align*}
%	where the second to last inequality follows from Jensen's inequality and the final step follows from $T = \sum_{k=1}^K kh = \frac{K(K-1)}{2} h$ so that $K < (2T/h)^{1/2}$. 
\end{proof}

\noindent\textbf{Remark}. \emph{Adding constants related to the dimension of the CMDP, we have the regret bound
\begin{equation}
\Delta(T) \le O(mSA T^{\frac{3}{4}}\sqrt{\log(mSAT/\delta)}).
\end{equation}
}

%\textcolor{red}{Add a remark with the regret bound in terms of the constants A, S, m}
%The proof leverages Proposition~\ref{thm:pseudoregret} and Lemma~\ref{lemma:visitcount} and computes a sum up over $K$ episodes. 

%\begin{proof}
%For the full proof you are referred to the longer version of %this paper.
%\end{proof}

\subsection{Specializing to the Constrained Multi-Armed Bandit Setting}

Constrained Multi-Armed Bandits (CMABs) can be viewed as a special case of CMDPs, where there is only one state, $S=1$ and the transition kernel is trivially staying in that state with all actions. The policy in a CMAB is a probabilistic distribution over actions/arms $y(a)$ and the goal  is to solve the following linear program:
\begin{equation}
\max_{y} \{ \bar{r}^\top y \ | \ \mathbf{1}^\top y = 1, y \ge 0,  \bar{c}^\top y \le d  \}.
\end{equation}
Similarly, the per-step pseudo-regret is defined as $\bar{r}^\top (\bar{y} - \tilde{y}_k)$ where $\bar{y}$ is the optimal randomized policy and $\tilde{y}_k$ is the policy execute in episode $k$ of {\CUCRL}. Running {\CUCRL} with $S=1$, the following corollaries hold.
\begin{corollary} 
In CMABs, {\CUCRL} is $\delta$-safe.
\end{corollary}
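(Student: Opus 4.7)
The plan is to observe that a CMAB is a CMDP with $|\mathcal{S}|=1$ and a trivial (deterministic self-loop) transition kernel, so the corollary follows by direct specialization of Theorem~\ref{thm:deltasafe}. The only things to verify are (a) that the concentration argument in Lemma~\ref{lemma:unionbound} still delivers the pointwise upper-confidence domination $\bar{c}_i(a)\le \tilde{c}_{i,k}(a)$ with probability at least $1-\delta$, simultaneously for every arm $a$, cost index $i\in[m]$, and episode $k$; and (b) that feasibility of $\tilde{y}_k$ for the CMAB version of \eqref{eq:RLP} then translates into constraint satisfaction under the true cost means.

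For (a), I would note that nothing in the derivation of Lemma~\ref{lemma:unionbound} uses any nontrivial state structure: it is a Hoeffding bound applied to each per-action cost estimator $\hat{c}_{i,k}(a)$, followed by union bounds over the visitation count $n\in\{1,\dots,t_k\}$, the actions $a$, the cost indices $i\in[m]$, and the episodes $k\ge 1$. Setting $S=1$ in that lemma merely tightens the required confidence radius; the conclusion
\begin{equation*}
|\hat{c}_{i,k}(a)-\bar{c}_i(a)|\le \left(\frac{\log(A(m+1)\pi^2 t_k^3/3\delta)}{2\max\{1,N_k(a)\}}\right)^{1/2}
\end{equation*}
holds on an event $\mathcal{E}$ of probability at least $1-\delta$, and on $\mathcal{E}$ the definition of $\tilde{c}_{i,k}(a)$ in \eqref{eq:tildec} immediately gives $\bar{c}_i(a)\le \tilde{c}_{i,k}(a)$ for every $a$, $i$, and $k$.

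For (b), the CMAB specialization of \eqref{eq:RLP} enforces $\sum_a \tilde{c}_{i,k}(a)\tilde{y}_k(a)\le d_i$ for every $i\in[m]$, by definition of $\tilde{y}_k$. On the event $\mathcal{E}$, pointwise domination of the true cost means by the upper-confidence values yields
\begin{equation*}
C_i(\tilde{\pi}_k)=\sum_a \bar{c}_i(a)\tilde{y}_k(a)\le \sum_a \tilde{c}_{i,k}(a)\tilde{y}_k(a)\le d_i,
\end{equation*}
so every episode's learned policy $\tilde{\pi}_k$ is safe; the intervening baseline-policy phases are safe by the standing assumption on $\pi_0$. Thus, on an event of probability at least $1-\delta$, the policy executed at every time step satisfies the cost constraints, which is exactly the definition of $\delta$-safe.

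The main (non-)obstacle is simply confirming that the dependence on $P$ in Theorem~\ref{thm:deltasafe} is vacuous in this setting: when $S=1$, the flow constraint $I_o y = Py$ collapses (both $I_o$ and $P$ act as the $1\times A$ row of ones), contributing nothing beyond the simplex constraints $\mathbf{1}^\top y=1$, $y\ge 0$ that are already present in the CMAB linear program. Hence no step of the CMDP proof is lost, and the corollary reduces to invoking Theorem~\ref{thm:deltasafe} with $S=1$.
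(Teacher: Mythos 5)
Your proposal is correct and matches the paper's approach: the paper itself proves this corollary simply by noting that it follows directly from Theorem~\ref{thm:deltasafe} (via Lemma~\ref{lemma:unionbound}) specialized to $S=1$, which is exactly the reduction you carry out. Your additional verification that the flow constraint $I_o y = Py$ collapses to the simplex constraints when $S=1$ is a worthwhile detail the paper leaves implicit, but it does not change the argument.
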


\begin{corollary} 
In CMABs, Under the assumptions of Proposition~\ref{thm:pseudoregret}, with probability at least $1-\delta$, {\CUCRL} has total pseudo-regret $\Delta(T) = O(T^{\frac{3}{4}}\sqrt{\log(T/\delta)})$.
\end{corollary}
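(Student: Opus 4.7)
The plan is to observe that the CMAB setting is a strict specialization of the CMDP framework: take $S=1$ and let the transition kernel be the trivial self-loop on the single state. Under this reduction, the constraint $I_oy=Py$ collapses entirely (since $I_o=\mathbf{1}^\top$ and $P=\mathbf{1}^\top$ on the single state, so the flow-balance equation is implied by $\mathbf{1}^\top y=1$), and \eqref{eq:RLP} reduces to the linear program written in the CMAB subsection. The {\CUCRL} algorithm is well-defined in this specialization, so in principle the corollary should follow by directly invoking Theorem~\ref{thm:regret} with $S=1$.

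Before invoking Theorem~\ref{thm:regret}, I would check that each of its hypotheses carries over. The algorithmic hypotheses (existence of a baseline $\pi_0$, feasibility of the RLP from episode $1$, and the Slater-like condition $\bar r^\top(\bar y-y_0)=\alpha>0$, $\bar c^\top(\bar y-y_0)=\beta>0$) are assumed directly in the corollary via the reference to Proposition~\ref{thm:pseudoregret}. The structural hypothesis on $\pi_0$ inducing an irreducible, aperiodic Markov chain with $y_0(a)\ge\rho>0$ is automatic in the bandit setting: the induced chain on a single state is trivially ergodic, and a natural choice such as $\pi_0(a)=1/A$ gives $\rho=1/A$ with mixing time $\xi=1$ and $\varphi=1$ in Lemma~\ref{lemma:visitcount}. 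The sample-complexity precondition $\delta\le \varphi SAK\exp(-\rho h/(288\xi))$ then specializes accordingly and is a mild requirement on $h$.

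With the hypotheses verified, the proof of Theorem~\ref{thm:regret} can be replayed verbatim: Lemma~\ref{lemma:unionbound} still supplies the confidence intervals (with $S=1$), Proposition~\ref{thm:pseudoregret} still yields the per-episode bound
\begin{equation*}
\bar r^\top(\bar y-\tilde y_k)\le 2\Bigl(\tfrac{2\alpha m}{\beta}+1\Bigr)\sum_{a}\Bigl(\tfrac{\log(A(m+1)\pi^2 t_k^3/3\delta)}{2\max\{1,N_k(a)\}}\Bigr)^{1/2},
\end{equation*}
and Lemma~\ref{lemma:visitcount} still produces $N_k(a)\ge(k-1)\rho h-(k-1)(72\xi\rho h\log(\varphi AK/\delta))^{1/2}$. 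Summing over $k=1,\dots,K$ and using $T=\tfrac{K(K-1)}{2}h$, so that $K=O(\sqrt{T})$, yields the same $O(T^{3/4}\sqrt{\log(T/\delta)})$ bound.

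The main (very modest) obstacle is the justification that the Markov-chain Chernoff bound of \citep{chung2012chernoff} used in Lemma~\ref{lemma:visitcount} remains meaningful when the underlying chain is degenerate (a single state). This can be handled either by observing that Lemma~\ref{lemma:visitcount} reduces to a standard i.i.d. Hoeffding bound on the arm-pull counts under $\pi_0$ with $\xi=1$ and $\varphi=1$, or by simply noting that the Markov-chain Chernoff inequality trivially subsumes the i.i.d. case. Either way, the resulting bound on $N_k(a)$ has the same $\Theta(k\rho h)$ growth, and the summation producing the $T^{3/4}$ rate is unaffected, proving the corollary.
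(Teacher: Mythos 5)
Your proposal is correct and matches the paper's argument, which simply observes that the corollary "follows directly from the corresponding results in the preceding section," i.e., by specializing Theorem~\ref{thm:regret} to $S=1$ with the trivial self-loop transition kernel. The additional care you take in checking that the hypotheses of Lemma~\ref{lemma:visitcount} degenerate gracefully (e.g., $\xi=1$, $\varphi=1$ under a uniform baseline) is a sensible elaboration but does not change the route.
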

The proofs of the above two corollaries follow directly from the corresponding results in the preceding section. 

\section{Experiments}
\label{sec:experiment}

The goal of this section is to explore a few illustrative examples which highlight different features of our approach.
%In this section we present examples to demonstrate the performance of the proposed algorithm. We first present a two arms bandit example with per-round budget constraint since that is a special case of CMDP. We show that our algorithm learns a randomized policy that pull each arm with a certain probability. We then demonstrate several safety grid world CMDP example and compare our C-URCL algorithm to risk sensitive UCRL2 algorithm.

\subsection{Two Armed Bandit with per Round Budget Constraints}

\begin{figure}[t]
	\centering
	\subfigure[ ]{\label{subfig:cbandit}\includegraphics[width=0.31\linewidth]{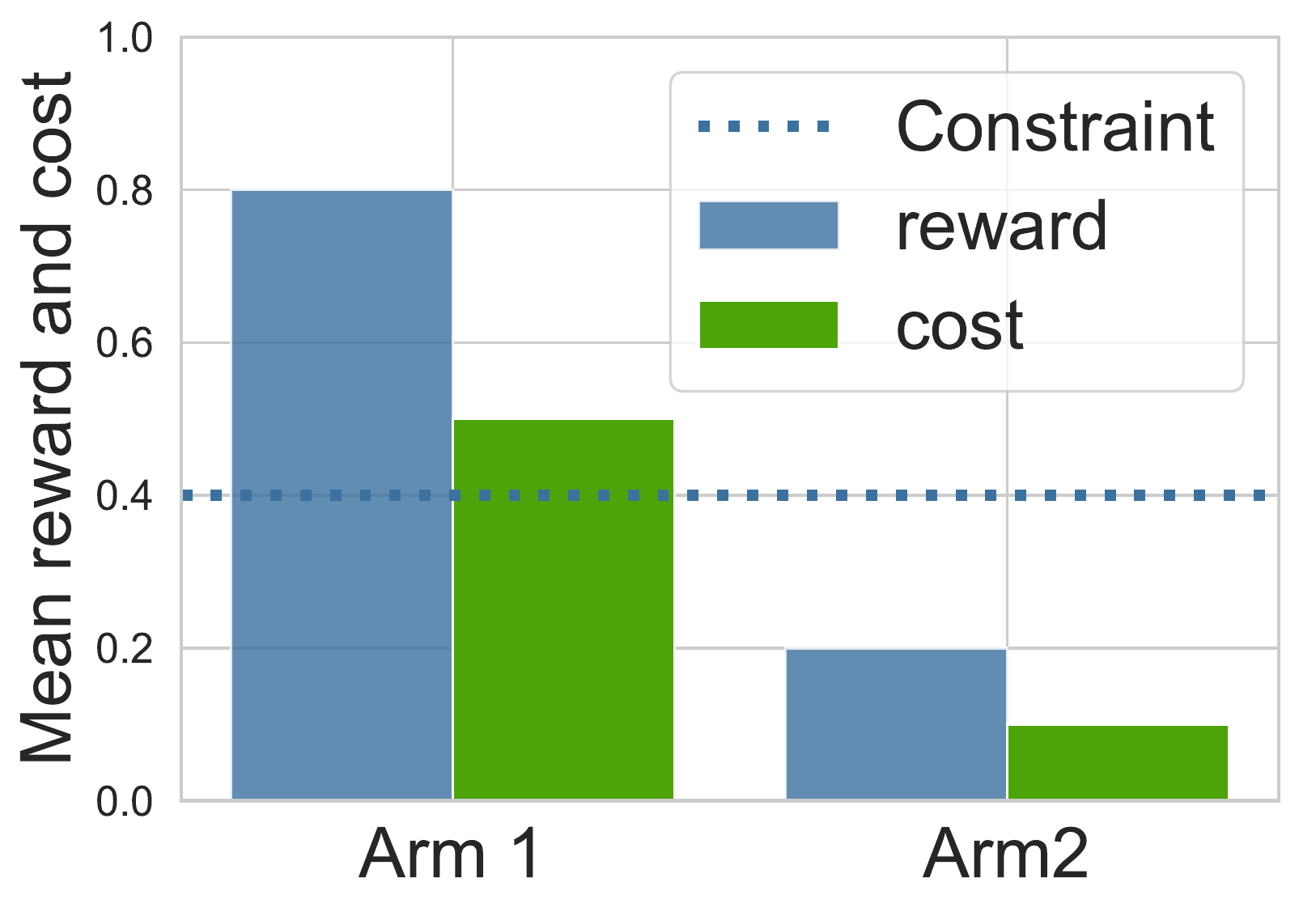}} 
	\subfigure[]{\label{subfig:pull_count}\includegraphics[width=0.3\linewidth]{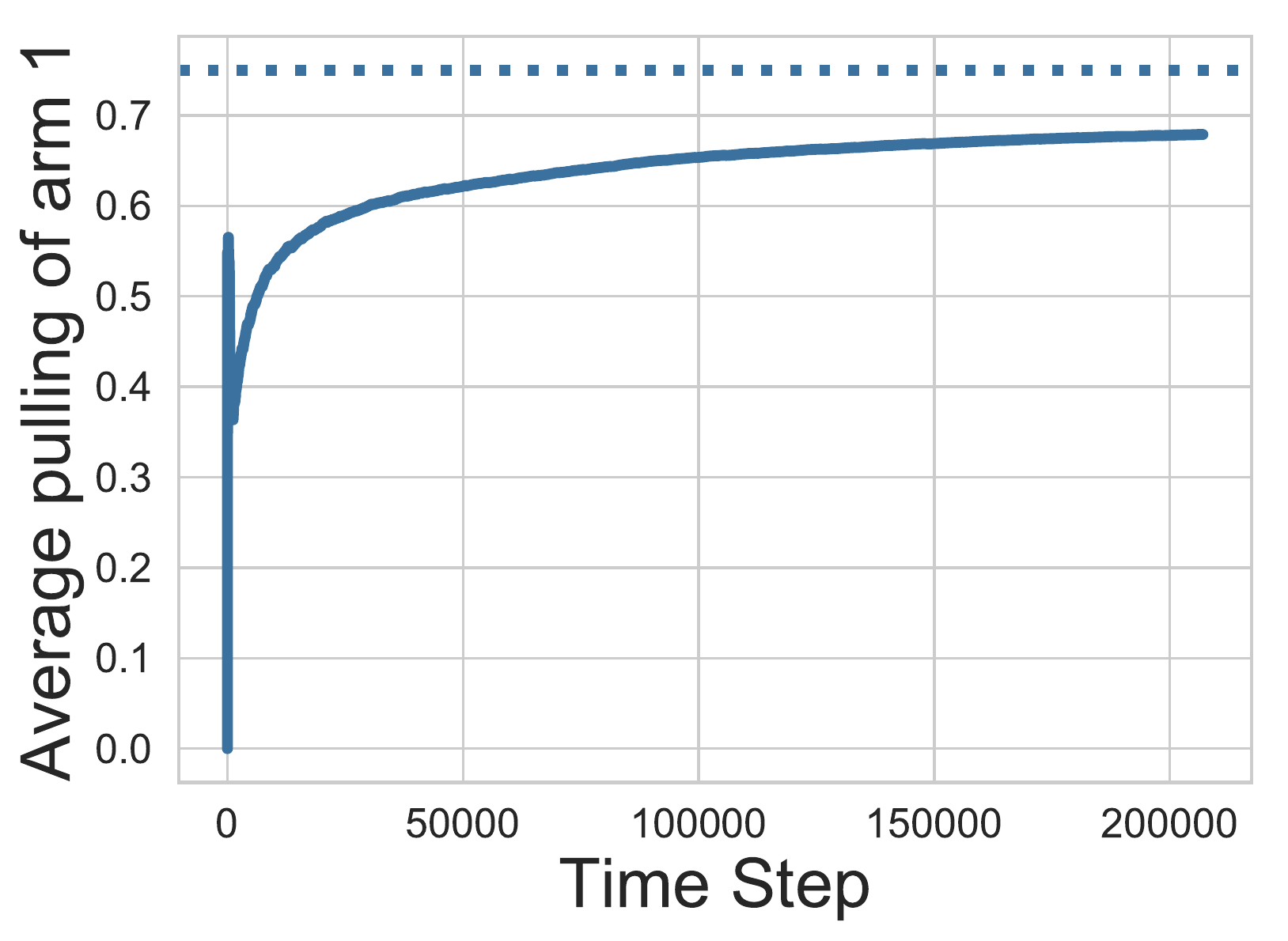}}
	\subfigure[]{\label{subfig:regret_bandit}\includegraphics[width=0.31\linewidth]{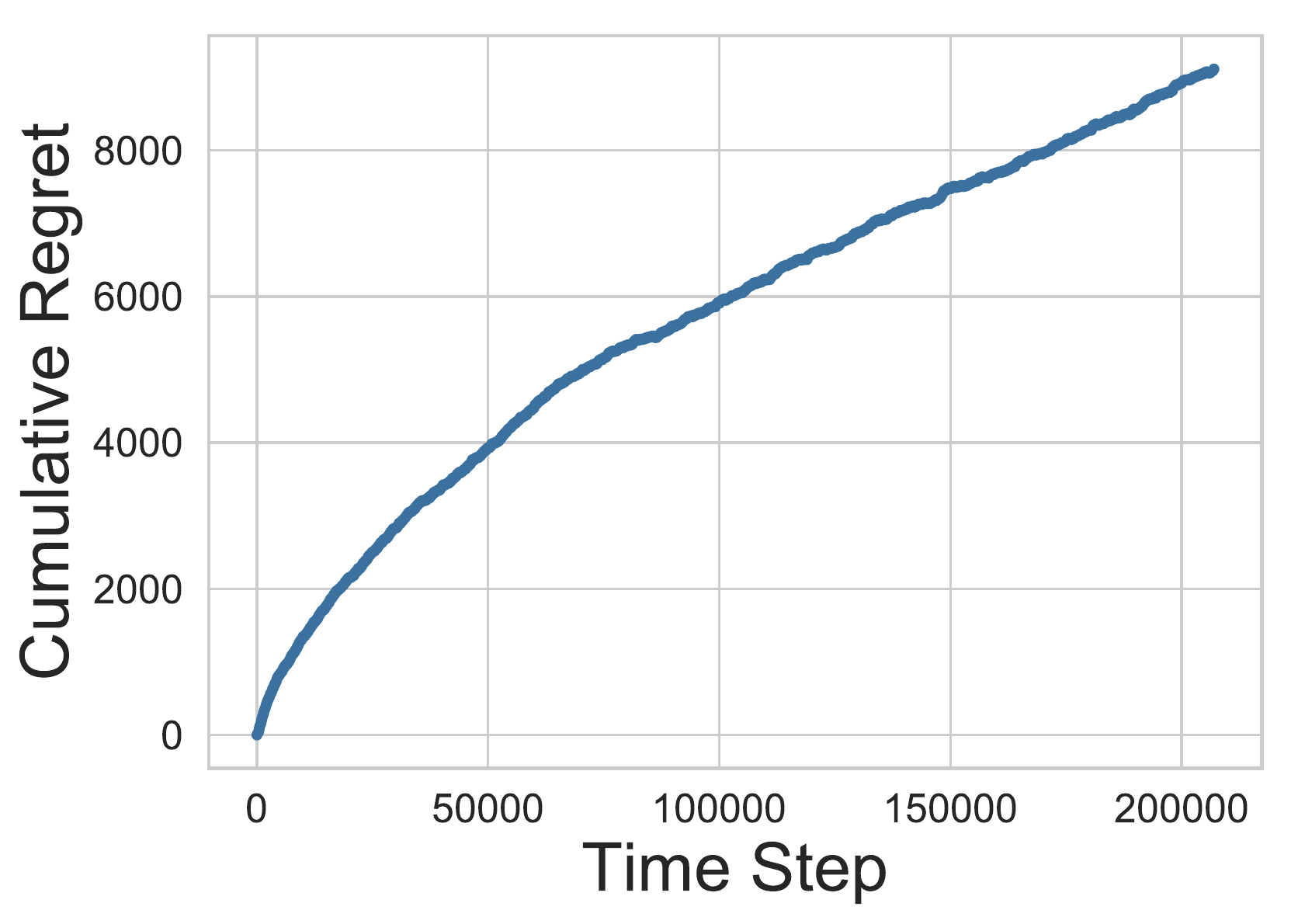}}
	\caption{Two armed bandit with per-round budget constraint: (a)  mean reward and cost of each arm as well as the per-round constraint; (b)  average number of times arm one is pulled; (c) the cumulative regret of {\CUCRL}.}
	\label{fig:cbandit}
\end{figure}

We first consider a simple two arms bandit example. As stated before, the CMDP reduces to a constrained multi-armed bandit problem when $|\mathcal{S}|=1$.  The reward and cost of each arm are unknown and stochastic. In our simulation, the reward and cost is draw from a binomial distribution, with the mean shown in Figure \ref{subfig:cbandit}.  Even though arm one has a better reward, we cannot pull arm one all the time since the constraint is set to be less than the mean cost of arm one. The optimal policy is to pull arm one with probability $0.75$ and arm two with probability $0.25$. The baseline policy we use to start exploration is pulling the two arms uniformly at random. Figure \ref{subfig:pull_count} and \ref{subfig:regret_bandit} show the average number of times arm one is pulled and the cumulative regret of {\CUCRL}, respectively.  The average pull count of arm one never exceeds $0.75$.
%It gradually gets close to it as the algorithm being less conservative during learning.
\begin{algorithm}[t]
	\SetAlgoLined
	\KwIn{safety parameter $\delta \in (0,1)$, baseline policy $\pi_0(a|s)$, episode length $h$, risk sensitive parameter $\lambda$.}
	{\textbf{Initialization:} set $t=1$, observe the initial state $s_1$ }
	
	%\KwResult{how to write algorithm with \LaTeX2e }
	%initialization\;
	\For{$\mathrm{episodes}$ $k=1, 2, \dots, K$}{
		$t_k = t$ \tcp*{initialize start time of episode $k$ }
		\While{$t\leq t_k + h$\tcp*{Execute baseline policy $h$ times for exploration}}{  
			
			Draw action $a_t \sim \pi_0(\cdot|s_t)$ 
			
			Observe reward $r_t$, costs $c_{i,t}$, and the next state $s_{t+1}$ 
			
			$t\gets t+1$
			
			%\lily{do you mean for each $t$ in the exploration phase?} 
			
		}

		$N_k(s,a) = \textstyle\sum_{t' = 1}^{t} \mathbf{1}(s_{t'} = a, a_{t'} = a), \ \forall (s,a)\in \mathcal{S}\times\mathcal{A}$ \tcp*{set the state-action count}
		
		$R_k(s, a) =  \textstyle\sum_{t' = 1}^{t} (r_{t'} - \lambda^\top c_{t'}) \mathbf{1}(s_{t'} = a, a_{t'} = a) $\tcp*{cumulative reward cost trade-off}
		
		$\hat{r}_k(s, a) = \textstyle\frac{R_k(s, a)}{\max \{1, N_k(s, a)\}}, \quad \tilde{r}_k(s,a) = \hat{r}_k(s, a) + \sqrt{\frac{7 \log (2SAt_k / \delta)}{2 \max \{ 1, N_k(s,a) \}}}$ \tcp*{compute estimates}
		
		$\tilde{y}_k\gets \arg\max \{ \tilde{r}_k^\top y | I_o y = Py, \mathbf{1}^\top y = 1, y \ge 0 \}$ 
		
		$\tilde{\pi}_k\gets \tilde{y}_k(s,a)/(\sum_{a\in \mathcal{A}}\tilde{y}_k(s,a))$ \tcp*{recover policy}
		%\tcp*{Get steady-state occupation measure from \eqref{eq:RLP}}
		
		%		Use \eqref{eq:RLP} to solve for a steady state  and recover a policy $\tilde{\pi}_k$ where
		%	\[
		%		\tilde{r}_k(s, a) = \textstyle\min\big\{ %\hat{r}_k(s,a) + \big(\frac{\log (S A (m+1) \pi^2 t_k^3 / 3 \delta)}{2 \max \{1, N_k(s,a)\}}\big)^{1/2}, 1 \big\}\]
		%	\[	\tilde{c}_{i,k}(s, a) =\textstyle\min \big\{ \hat{c}_{i,k}(s,a) + \big(\frac{\log (S A (m+1) \pi^2 t_k^3 / 3 \delta)}{2 \max \{1, N_k(s,a)\}}\big)^{1/2}, 1 \big\}\]
		%	\textcolor{red}{Include eqn for computing policy}  \tcp*{Compute policy $\tilde{\pi}_k$} 

		\While{$t\leq t_k + kh$ \tcp*{Execute $\tilde{\pi}_k$ policy $(k-1)h$ times}}{
			
			Draw action $a_t \sim \tilde{\pi}_k(\cdot|s_t)$ 
			
			Observe reward $r_t$, costs $c_{i,t}$, and the next state $s_{t+1}$
			
			$t\gets t+1$
			
		}
	}
	\caption{risk-sensitive UCRL2 ({\RSUCRL}) algorithm}
	\label{alg:rsUCRL}
\end{algorithm}
\subsection{Three State CMDP}
 To  demonstrate the performance of {\CUCRL}, we consider a simple three state CMDP.  As show in Figure \ref{subfig:cmdp}, the CMDP we consider has three states and two actions. An agent can take either a \emph{risky} exploratory action in which the navigate to another state or they can take the \emph{safe} action and remain in the current state.
 There is no reward or cost for staying in the current state but there will be a stochastic reward and cost if the agent navigates. In the simulation, the reward and cost of each state-action pair are each draw from a binomial distribution, with the means defined in the labels on edges in Figure~\ref{subfig:cmdp}.  Obviously, without this constraint, the optimal policy is to navigate in each of the states. In this problem, we consider the constraint that in expectation, the average cost should be less than $0.2$. This constraint prevents the agents from continuously navigating between the three states. In particular, as shown in Figure~\ref{subfig:opt_policy}, the constrained optimal policy is a randomized policy that has positive probability on the safe action in each state. The relatively conservative baseline policy we use in {\CUCRL} for exploration is staying in the current state with probability $0.8$ and navigate to the next state with probability $0.2$. 
\begin{figure}[t]
	\centering
	\subfigure[]{\label{subfig:cmdp}\includegraphics[width=0.3\linewidth]{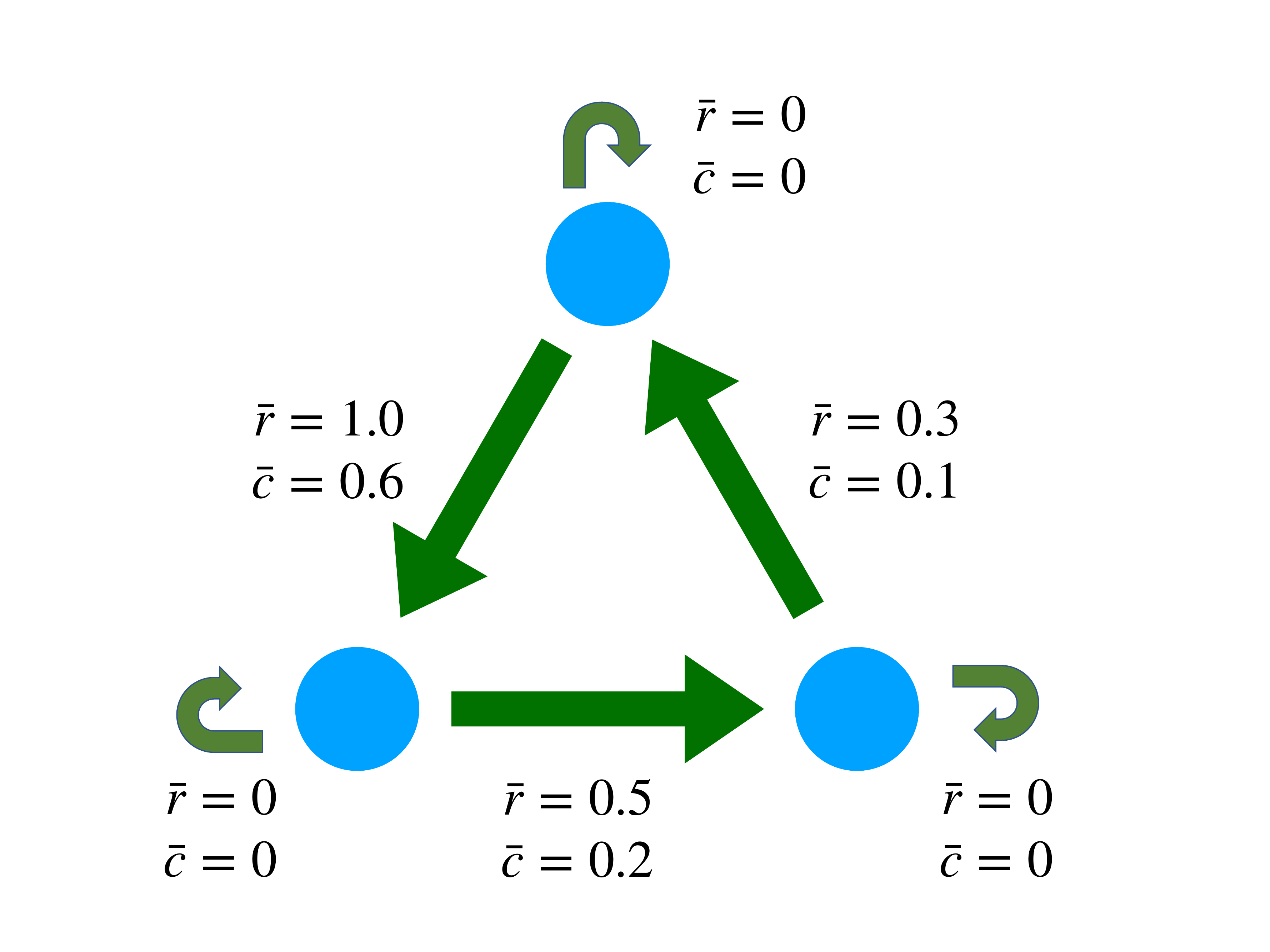}} 
	\subfigure[]{\label{subfig:opt_policy}\includegraphics[width=0.4\linewidth]{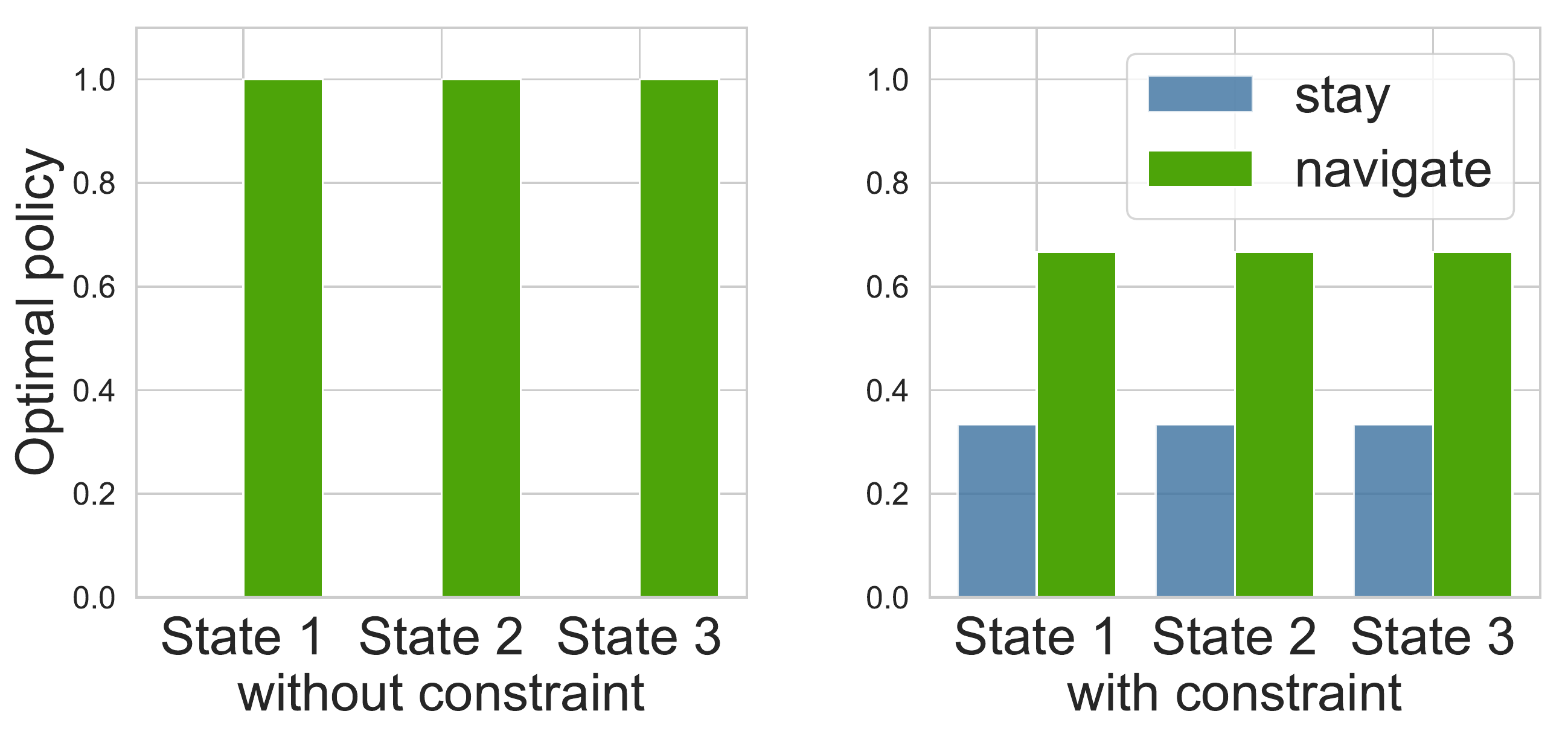}}
	\subfigure[]{\label{subfig:lambda}\includegraphics[width=0.19\linewidth]{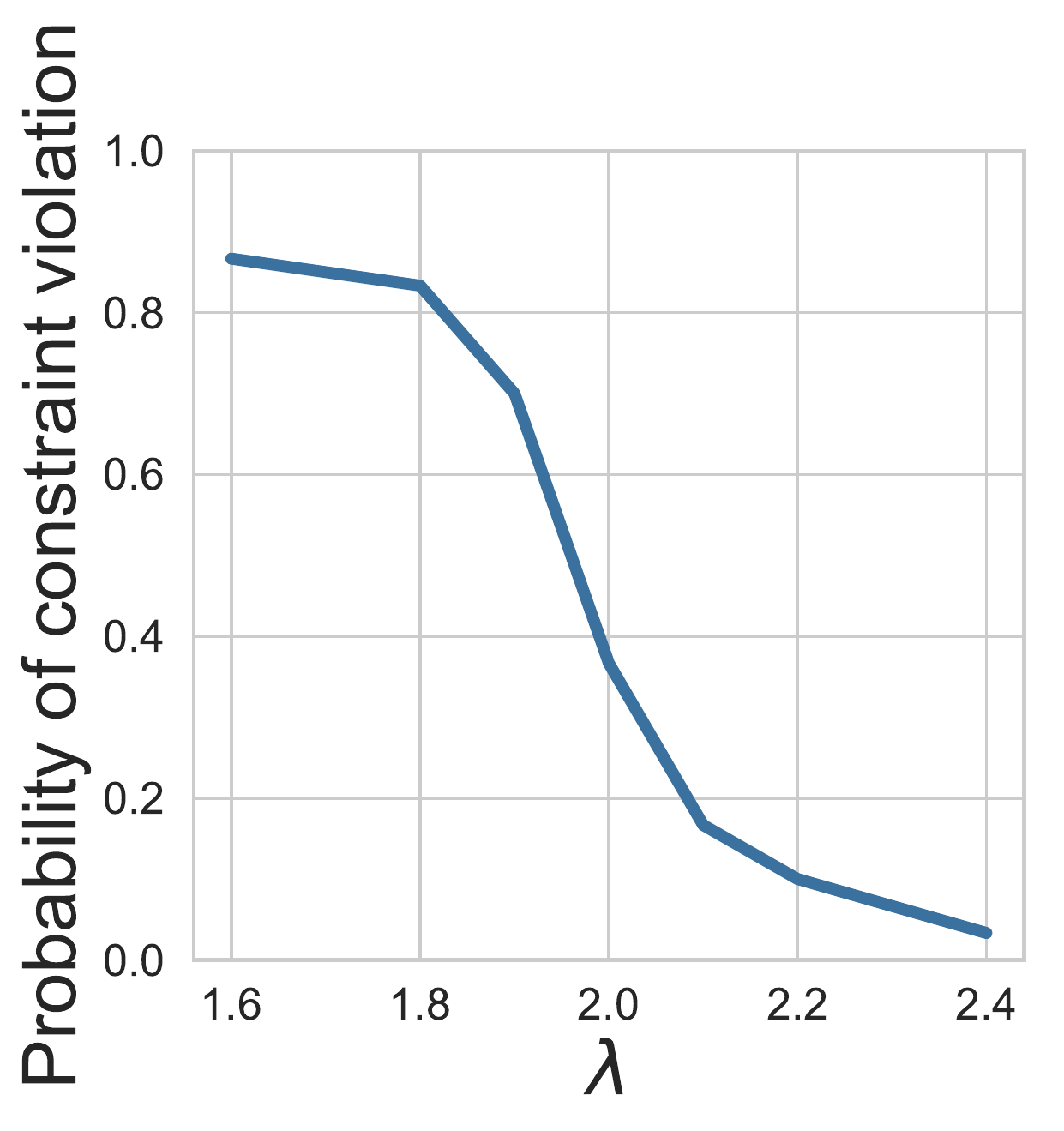}}
	%\subfigure[Average cost]{\label{subfig:cost}\includegraphics[width=0.3\linewidth]{}}
	%\subfigure[Cumulative regret]{\label{subfig:regret}\includegraphics[width=0.4\linewidth]{}}
	\caption{Simple CMDP. (a) CMDP structure; (b) optimal policy computed with the true mean reward and mean cost, with and without the constraint on cost, $d=0.2$; (c) probability of constraint violation in 30 training episodes by risk-sensitive {\UCRL} ({\RSUCRL}). }%\lily{This example is way too small to be interesting. Can you make it more than 3 states? I think having one toy example is good if it demonstrates somethign interesting like a failure of an alternative or naive algorithm. But beyond that, you need a compellign example that has a large number of states or actions for example... or representing an actual real world problem.}
	\label{fig:cmdp}
\end{figure}
\begin{figure}[t]
	\centering
		\subfigure[]{\label{fig:regret_cost}\includegraphics[width=0.7\linewidth]{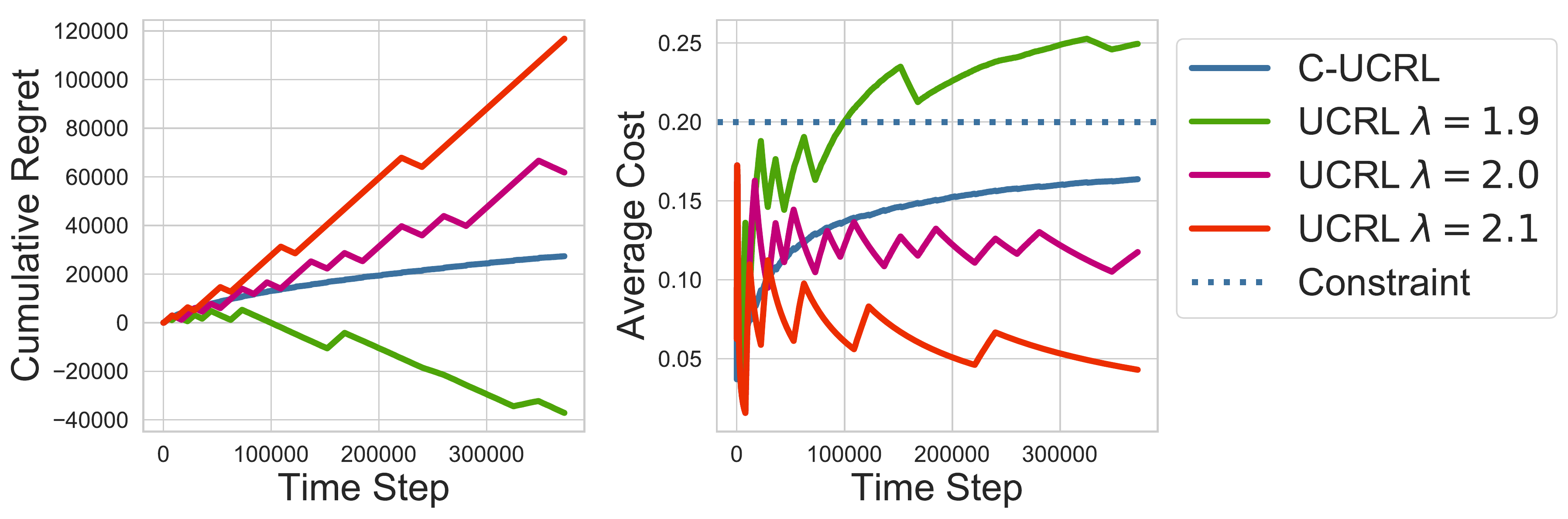}}
		
		\subfigure[]{\label{fig:policy}\includegraphics[width=0.75\linewidth]{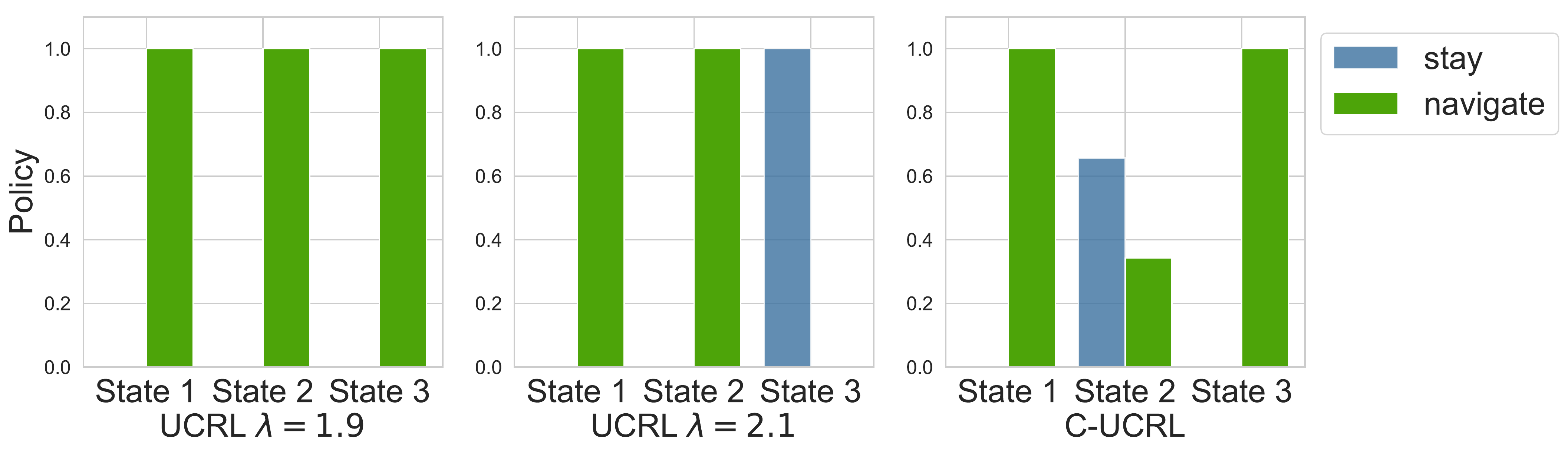}}
	\caption{{\CUCRL} vs.~{\RSUCRL}: (a) Cumulative regret and average cost for {\CUCRL} and risk sensitive {\UCRL}; (b) Policy learned by {\CUCRL} and  {\RSUCRL}.}
\end{figure}

We compare our approach with the {\UCRL} algorithm. However, {\UCRL} does not allow for constraints or multiple reward/cost criteria. Hence,  we leverage the idea of risk sensitive reinforcement learning \citep{leike2017ai}, where we treat a linear combination of reward and cost---i.e., $r - \lambda c$---as the reward for the {\UCRL} algorithm (Algorithm \ref{alg:rsUCRL}). The hyperparameter $\lambda$ represents the trade off between the reward and cost, the combination of which represents the reward in the classical implementation of {\UCRL}; we refer to risk-sensitive {\UCRL} by {\RSUCRL}. Figure \ref{subfig:lambda} shows the constraint violation probability in $30$ training episodes by {\RSUCRL} algorithm with different $\lambda$. Figure \ref{fig:regret_cost} shows the cumulative regret and average cost of the {\CUCRL} and {\RSUCRL} algorithms. As we can see,  when the cost value is underestimated ($\lambda = 1.9$), applying {\RSUCRL} directly leads to a `good' reward (i.e., the regret is negative as it gets more reward than the optimal randomized policy), yet the constraints are violated.  On the other hand, when the costs are overestimated ($\lambda = 2.1$), {\RSUCRL} is too conservative about the cost and, thus, receives high regret. We can observe that {\CUCRL} does not violate the constraint during learning though in this experiment, $\delta$ is set to be $0.1$, meaning that with probability at least $0.9$, the constraint will not be violated in all episodes.
%\textcolor{red}{What happens if you put $\lambda=2$? or somewhere in between? can you get the same performance? Seems like you need to do more of a sweep here over lambda to see what happens. Picking two values seems like a gross over simplification. Also you need to highlight what the take away is. Is it is simply that UCRL2 will always lead to a constraint violation? What is $\delta$ for our problem here? and how often do we actually violate the constraint?}
%\begin{figure}[t]
%	\centering
%	\includegraphics[width=0.9\linewidth]{three_state_policy.pdf}
%	\caption{Policy learned by C-UCRL and risk sensitive UCRL2.}
%	\label{fig:policy}
%\end{figure}

The fundamental problem with {\RSUCRL} is that with only one criterion, the policy it learns will always be a deterministic policy, while in this CMDP, the optimal policy is randomized.  %\textcolor{red}{This is only true with high probability what is $\delta$ for this example?} 
Figure \ref{fig:policy} shows the policy learned by {\CUCRL} and {\RSUCRL}. When $\lambda = 1.9$, {\RSUCRL} learn the optimal policy as there is no constraint, which leads to constraint violation. When $\lambda=2.1$, the policy learned by {\RSUCRL} is to stay in one state forever. On the contrary, the policy learned by {\CUCRL} algorithm converges to the optimal randomized policy.
%In this section we consider the problem that taxi or autonomous vehicle navigating between different areas of the city. The cost in this case is the fuel or electric consumption and we want it to be less than a threshold in expectation. The reward and the cost are unknown and stochastic since they depends on a lot of factors such as weather, time of the day, etc, but the transition kernel of the CMDP is fixed and known. If we consider a large number of autonomous vehicles, the constraints can be interpreted as bounding the density of an autonomous multi-agent system and the problem is known as a vehicle swarm coordination problem \citep{el2019controlled}.

\subsection{Grid World with Safety Constraints}

Motivated by the goal of ensuring safety in reinforcement learning safety, we validate our algorithms using a 2D grid-world exploration problem \citep[2.24]{leike2017ai}. This example also represents a crude abstraction of rovers exploring the surface of Mars as described in \citep{wachi2018safe}. 

\begin{figure}[t]
	\centering
	\subfigure[]{\label{subfig:grid_world}\includegraphics[width=0.3\linewidth]{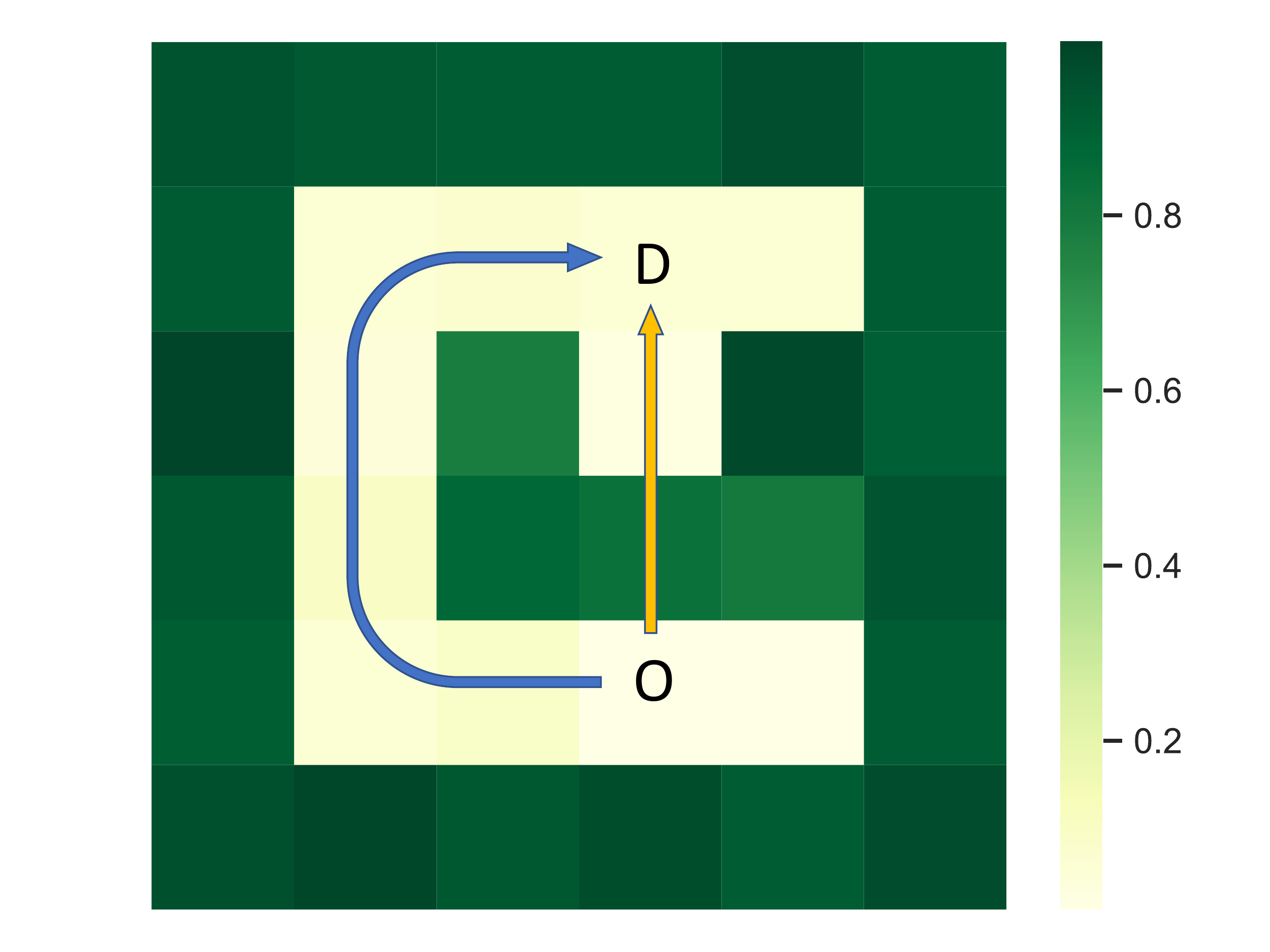}} 
	\subfigure[]{\label{subfig:grid_world_policy}\includegraphics[width=0.6\linewidth]{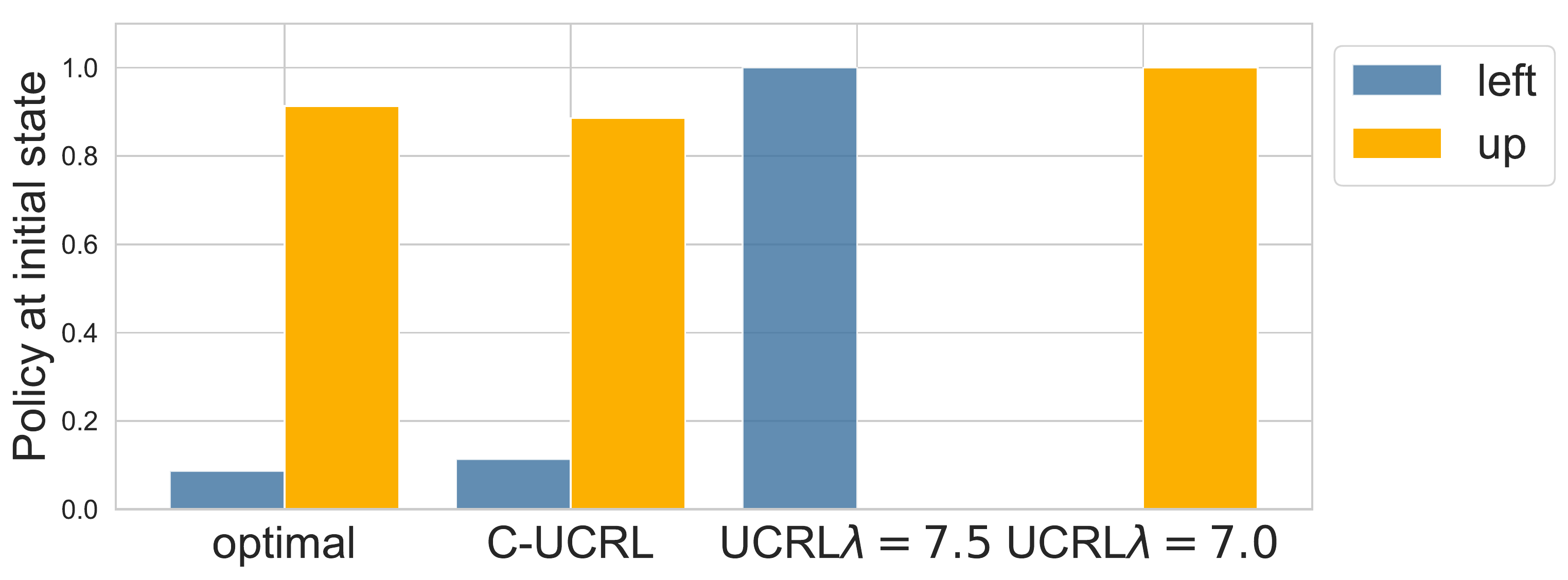}} 

	\caption{Grid World with Safety Constraints. (a) Grid world structure: the states with darker green color have larger mean cost, and `O' and `D' are the origin and destination states, respectively; (b) Policy learned by different algorithms: the blue column represents the probability of going `West' (choose blue route) and orange column represents the probability of going `North' (choose orange route).}
	\label{fig:grid_world}
\end{figure}

\begin{figure}[t]
	\centering
	\includegraphics[width=0.75\linewidth]{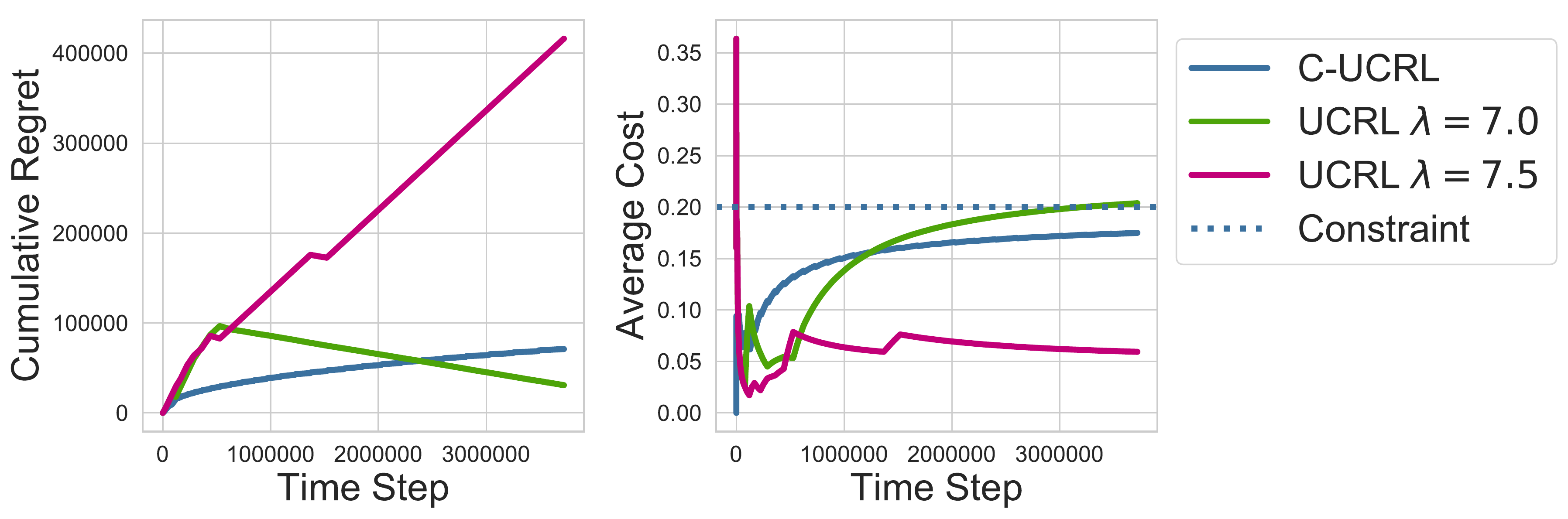}
	\caption{Cumulative regret and average cost of {\CUCRL} and  {\RSUCRL}.}
	\label{fig:grid_world_regret_cost}
\end{figure}

Figure \ref{subfig:grid_world} shows the CMDP structure. The green color in each state represents the mean cost of that state, and the darker the color, the higher the cost is. In the Mars exploration problem, those darker states are the states with large slope that the agents want to avoid. The constraint we enforce is the upper bound of the per-step probability of step into those state with large slope---i.e., the more risky or potentially unsafe states to explore. The agent starts from the origin state `O' and receives reward $1$ if it reaches the destination state `D' after which it returns to the origin. In the simulation, the cost of each state is draw from a binomial distribution, with the mean shown in the figure. At each time step, the agent can take action to move into any of its four neighboring states. Due to the stochastic environment, transitions are stochastic (i.e., even if the agent’s action is to go ``North”, the environment can send the vehicle with a small probability to ``East”).

Without  safety constraints, the optimal policy is obviously to always choose the orange route in Figure \ref{subfig:grid_world}. However, with constraints, as we can see in Figure \ref{subfig:grid_world_policy}, the optimal policy is a randomized policy that use both blue and orange routes with some probabilities. The relatively conservative baseline policy we use in {\CUCRL} for exploration is choose both routes uniformly at random. Figure \ref{fig:grid_world_regret_cost} show the cumulative regret and average cost of the {\CUCRL} and {\RSUCRL} algorithm and Figure \ref{subfig:grid_world_policy} shows the policy learned by them. As we can see, {\RSUCRL} either learns to only choose orange or blue route respectively, causing either constraint violation or large reward regret, while {\CUCRL} converges to the optimal policy.

%\textcolor{red}{You are repeating yourself here... this is already said in the above example... I have the same complaints: 1) what is the take away? 2) what is the delta part of the safety condition? how often do we violate it? 3) need to do a broader sweep on lambda to draw conclusions...}
%We compare our approach with UCRL2 algorithm. Because UCRL2 doesn't handle multiply criterion, we leverage the idea of risk sensitive RL, where we treat a linear combination of reward and cost as the reward of UCRL algorithm, $r - \lambda c$. $\lambda$ here is a hyper parameter that represents the trade off between reward and cost. Figure \ref{fig:grid_world_regret_cost} shows the cumulative regret and average cost of C-UCRL and UCRL2 algorithms. As we can see that when we underestimate the cost ($\lambda = 7.0$), applying UCRL2 directly can get good reward (per-step regret being negative as it get more reward than the optimal randomized policy). However, it will violate the constraint. When we overestimate the cost ($\lambda = 7.5$), UCRL2 is too conservative about the cost and, thus, receives high regret. As we can see from \ref{subfig:grid_world_policy}, risk sensitive UCRL2 fails to learn a randomized policy, which leads to either violating the constraint or getting low reward. Our C-UCRL algorithm always satisfies the constraint and in the meantime, learns a randomized policy converging to the optimal policy.

\begin{figure}[t]
	\centering
	\subfigure[]{\label{subfig:grid_world_2}\includegraphics[width=0.3\linewidth]{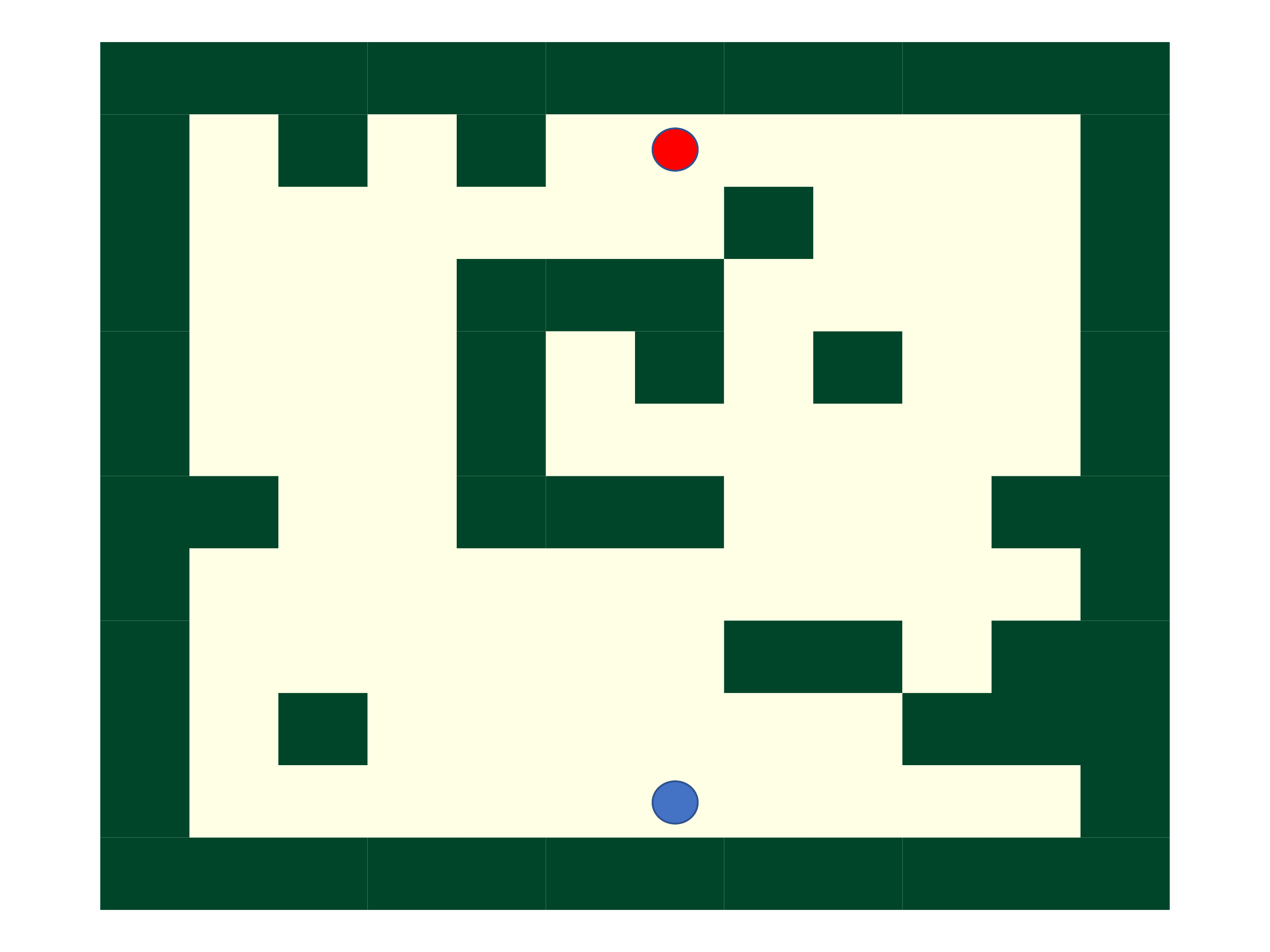}} 
	\subfigure[]{\label{subfig:grid_world_regret_cost_2}\includegraphics[width=0.65\linewidth]{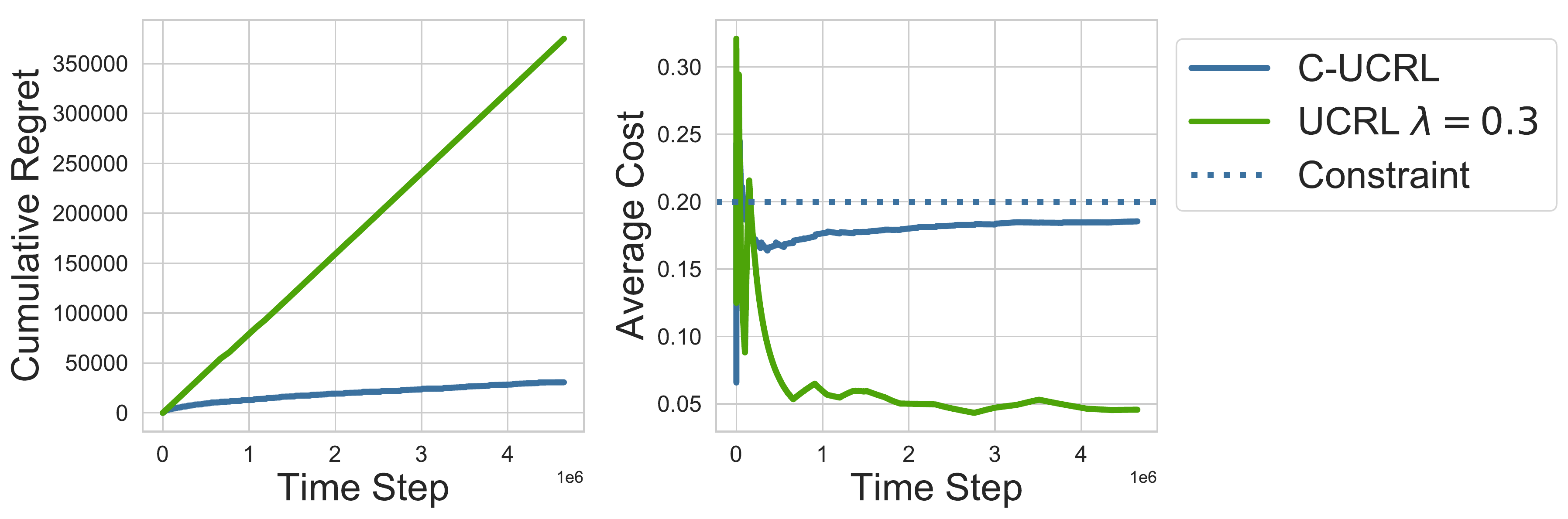}} 
	\caption{Grid World with Safety Constraints. (a) CMDP grid world structure: the states with green color have mean cost equals to $1$ and others have no cost; the blue state  is the origin state and the red state  is the destination state. (b)  Cumulative regret and average reward of {\CUCRL} and {\RSUCRL}. }
	\label{fig:grid_world_2}
\end{figure}

Figure \ref{subfig:grid_world_2} shows the structure of another larger scale safety grid world example. The green states in the figure have mean cost $1$ and the others have zero cost. The blue state is the origin state and the red state is the destination state, which has reward $1$. Figure \ref{subfig:grid_world_regret_cost_2} shows the cumulative regret and average cost of the {\CUCRL} algorithm and {\RSUCRL} algorithm. The {\RSUCRL} algorithm is able to learn a policy that does not violate the constraint if we choose a conservative $\lambda$, however, with much larger reward regret as compared to {\CUCRL}.

\section{Conclusion}
\label{sec:conclusion}

We formulate the problem of safe reinforcement learning when the transition kernel is known but the reward and constraint costs are unknown a priori as a CMDP and propose a {\CUCRL} algorithm to learn the optimal policy. 
Theoretically, we show that {\CUCRL} algorithm is guaranteed to satisfy the constraints during learning with probability at least $1-\delta$ and achieves $ O(T^{\frac{3}{4}}\sqrt{\log(T/\delta)})$ reward regret. Empirically, we provide examples which demonstrate two key properties relative to comparable algorithms: 1) {\CUCRL} is able to learn the optimal policy which in general is a randomized policy as opposed to a deterministic policy, and 2) {\CUCRL} has high-probability guarantees on remaining safe while learning.

Let us comment briefly on some of the limitations of our approach and avenues for future research. First, we remark that artful selection of the baseline policy $\pi_0$ and the duration $h$ for executing it in each episode is required. We choose $h$ based on the mixing time of the Markov chain induced by $\pi_0$. The choice of these two facets is really central to the algorithm as it defines the exploration phase and hence, the robust linear program that we solve for finding $\tilde{\pi}_k$. The baseline and duration need to be chosen such that in each episode the linear program has a non-trivial feasible set. Our results are predicated on this being case; as noted in Section~\ref{sec:algo}, in practice, however, it may not be. To handle this, we suggest the heuristic of executing the baseline policy in episode $k=1$ until $\tilde{c}_1^\top y_0\leq d$. 
%re-executing the exploration phase or using the policy of the previous episode.
A better understanding of how to ensure that in each episode the feasible set remains non-trivial is an avenue of future work.

We note also that it is likely that {\CUCRL} has a much worse sample complexity as compared to approaches which do not impose any criteria on safe learning during the exploration period. Better understanding of this trafeoff is an avenue for future work. Furthermore, our approach requires knowledge of the transition kernel. It is not immediately obvious how to extend classical approaches such as {\UCRL}, without further exacerbating sample complexity issues, due to the fact that central proof technique we employ is the robust linear programming formulation in each episode which is used to obtain a policy based on the confidence bounds. Alternative approaches may be better suited if the transition kernel is unknown. Another interesting direction that arose in our study of CMDPs is that there is potential to extend the theoretical results of {\UCRL} to {\RSUCRL} through a primal-dual lens for capturing the hyper-parameter $\lambda$; investigating this direction may lead to an alternative for addressing the unknown transition kernel setting, however, the issue of the optimal policy being non-deterministic for the true underlying CMDP and the fact that {\UCRL} seeks out deterministic policies remains.

% Acknowledgments---Will not appear in anonymized version
%\acks{We thank a bunch of people.}

%\bibliographystyle{plain}
\bibliography{reference}

\end{document}